\documentclass{article} 
\usepackage[final]{colm2026_conference}

\usepackage{microtype}
\usepackage{xcolor}
\usepackage{hyperref}
\usepackage{url}
\usepackage{booktabs}
\usepackage{multirow}
\usepackage{graphicx}
\usepackage{capt-of} 

\usepackage{amsmath}
\usepackage{amssymb}
\usepackage{algorithm}
\usepackage{algpseudocode}
\usepackage{tikz}
\usepackage{colortbl}
\usepackage{comment}
\definecolor{oursgreen}{HTML}{E2EFDA} 
\usetikzlibrary{arrows.meta, fit}
\algrenewcommand\algorithmicrequire{\textbf{Input:}}
\algrenewcommand\algorithmicensure{\textbf{Output:}}

\newtheorem{proposition}{Proposition}

\newcommand{\Wfinal}[1]{W^{(#1)}_{\mathrm{final}}}
  
\usepackage{enumitem}
\usepackage[normalem]{ulem}

\definecolor{glteal}{rgb}{0.0,0.45,0.45}
\newenvironment{fromchopthin}{\par}{\par}
\newcommand{\glcolor}{}

\usepackage{lineno}

\definecolor{darkblue}{rgb}{0, 0, 0.5}
\hypersetup{colorlinks=true, citecolor=darkblue, linkcolor=darkblue, urlcolor=darkblue}
\title{Chopthin-Consensus Power Sampling:\\
A Diversity-Preserving Approach to LLM Decoding}

\author{Minoo Ahmadi \quad Seyedarmin Azizi \quad Erfan Baghaei Potraghloo \\
\textbf{Mehdi Kamal \quad Massoud Pedram} \\
University of Southern California, Los Angeles, USA \\
\texttt{\{minooahm, seyedarm, baghaeip, mehdi.kamal, pedram\}@usc.edu}
}

\begin{document}

\ifcolmsubmission
\linenumbers
\fi

\maketitle
\lhead{Published at the COLM 2026 Workshop on Efficient Reasoning.}

\begin{abstract}

Inference-time power sampling via Sequential Monte Carlo (SMC) can substantially
improve large language model (LLM) reasoning without requiring post-training.
However, many existing SMC approaches rely on equal-weight resampling, which
can aggressively prune low-weight trajectories, discarding potentially correct
reasoning paths and degrading the genealogical diversity of the search space.
To address this, we introduce Chopthin-Consensus Power Sampling (CCPS). Our
method applies the Chopthin resampler to LLM decoding: rather than equalizing
weights and forcing unnecessary particle duplication, it enforces an upper bound
on the ratio between the largest and smallest weights and carries the unequal
weights forward. This targeted intervention preserves a richer set of distinct
reasoning paths, keeps the weighted SMC approximation unchanged in conditional
expectation, and guarantees a lower bound on the post-resampling effective
sample size (ESS). To fully exploit this enriched population, we employ a
semantic-majority selection mechanism that merges token-identical final
trajectories, clusters semantically equivalent answers, and returns the answer
supported by the largest number of distinct trajectories. Evaluating across
three open-weight models and five reasoning benchmarks, we show that Chopthin
increases oracle coverage in 13 of 15 settings. Combined with semantic-majority
selection, CCPS matches or exceeds the final-answer accuracy of the Power-SMC
baseline in 14 of 15 settings, delivering absolute gains of up to 10.6
percentage points. These findings demonstrate that diversity-preserving
resampling and diversity-aware selection are complementary mechanisms for
training-free LLM reasoning. Code is available at
\href{https://github.com/MinooAhmadii/chopthin-consensus-power-sampling}{github.com/MinooAhmadii/chopthin-consensus-power-sampling}.

\end{abstract}
\section{Introduction}

Reinforcement-learning (RL) post-training has driven much of the recent progress in the reasoning ability of large language models \citep{shao2024deepseekmath}. However, these performance gains come at a high cost: the post-training process requires a dedicated training pipeline, complex reward signals or verifiers, and substantial computational resources, ultimately freezing the improvements into a static set of new model parameters.

Fortunately, emerging research suggests that these resource-intensive training steps may not be strictly necessary to achieve similar reasoning capabilities. The improvements observed from RL post-training actually result, in large part, from increasing the probability of successful reasoning trajectories that already exist within the base model's output distribution \citep{yue2025rlsharpening}. This pivotal observation has motivated the development of \emph{power sampling}, an inference-time technique that completely bypasses training by amplifying these valid paths \citep{karan2026powersampling}. By targeting a distribution in which each complete output sequence has a probability proportional to its base-model probability raised to an exponent greater than 1, power sampling recovers much of the gain of RL post-training. We refer to this resulting distribution over complete sequences as the \emph{power distribution}.

Exact power sampling is computationally intractable because it requires calculating the probabilities of all possible sequence completions \citep{karan2026powersampling, azizi2026powersmc}. To approximate this target distribution, initial approaches relied on Metropolis--Hastings, a Markov Chain Monte Carlo method. Metropolis--Hastings operates by iteratively proposing full sequences and probabilistically accepting or rejecting them based on a target ratio; however, this repeated propose-and-reject cycle creates a severe serialization bottleneck, resulting in prohibitive inference latency \citep{karan2026powersampling}. To avoid this inefficiency, Power-SMC \citep{azizi2026powersmc} reformulates the problem using Sequential Monte Carlo (SMC) sampling. Rather than generating and evaluating complete sequences one by one, SMC builds solutions incrementally by maintaining a population of $N$ parallel \emph{particles}, each representing a \textit{partial token trajectory}. At each decoding step, the algorithm samples tokens from a proposal distribution and assigns an \emph{importance weight} to each particle. By dynamically evaluating and weighing these partial paths in parallel, SMC successfully targets the sequence-level power distribution, eliminating the latency bottlenecks of Metropolis--Hastings while maintaining competitive reasoning performance.

As Power-SMC progresses through the decoding steps, it encounters a major challenge: \emph{weight degeneracy} \citep{liu1998sequential}. Some particles inevitably accumulate the vast majority of the importance weight, while the rest drop near zero. To counteract this and restore the effective sample size (ESS), standard SMC employs equal-weight resampling, which removes low-weight particles, duplicating high-weight ones, and resetting all surviving weights to equality.
However, mitigating weight concentration in this manner introduces a distinct problem: the severe loss of \emph{genealogical diversity}. By aggressively cloning the highest-weighted particles and dropping the rest, standard equal-weight resampling drastically reduces the number of unique ancestral paths that survive to the end of the decoding process. While the weight-based ESS is restored, this increase can mask a substantial loss of genealogical diversity.
This loss of path diversity is particularly detrimental in language model decoding because importance weights are a poor proxy for eventual correctness. The weight of a particle reflects its prefix's relative mass under the power target and the proposal distribution; it does not evaluate whether that unfinished trajectory is logically sound or will yield the right answer. Consequently, standard weight-based resampling risks permanently deleting low-weight particles that actually contain the correct reasoning path, simply because they currently carry little importance mass.

To address the destructive nature of standard resampling, we replace it with \emph{Chopthin}~\citep{gandy2016chopthin}. Rather than strictly equalizing the weights in the population, Chopthin enforces a bounded ratio between the largest and smallest output weights, allowing the particles to carry these unequal weights forward in subsequent decoding steps (Figure~\ref{fig:genealogy}). By categorizing particles by their current mass, Chopthin thus applies a much gentler, targeted intervention: light particles are probabilistically \emph{thinned}, middle-weight particles pass through entirely unperturbed, and only excessively heavy particles are \emph{chopped} into equal-weight subdivisions. This approach is designed to preserve more low-weight (yet potentially correct)
trajectories, mitigating genealogical diversity loss while retaining
unbiasedness, exact particle count, and weight conservation
(Section~\ref{sec:chopthin}).

Preserving this richer ancestral population can increase the probability that at least one complete trajectory successfully navigates to a correct answer. We define this metric as \textbf{oracle coverage}, which denotes the fraction of problems for which the final population contains at least one correct answer. Although maximizing oracle coverage establishes theoretical headroom for model accuracy, this potential can only be realized if the final selection mechanism reliably identifies the correct answer among the surviving candidates. Unfortunately, standard selection rules often fail in this regard \citep{brown2024monkeys}. For instance, Power-SMC relies by default on a \emph{weight draw} strategy, which selects a final particle with a probability that is proportional to its accumulated weight. Under this rule, the low-weight, highly diverse trajectories that Chopthin preserves are the least likely to be chosen, highlighting a critical mismatch between trajectory generation and final answer selection (Section~\ref{sec:selection}).

To bridge this critical gap between trajectory generation and final answer selection, we introduce a novel decoding framework: \emph{Chopthin-Consensus Power Sampling} (CCPS). CCPS fundamentally upgrades Power-SMC with two interrelated innovations that help uncover the full potential of diverse reasoning paths. First, it replaces standard equal-weight resampling with Chopthin to preserve genealogical diversity. Second, to overcome the limitations of weight-based selection, CCPS introduces a \textbf{semantic majority} voting mechanism to extract the correct answer from the enriched population.
This mechanism first merges token-identical final trajectories so that each
exact duplicate contributes one count vote, while its pooled weight is retained
for tie-breaking. It then clusters semantically equivalent answers by comparing
candidate solutions only with one another, without consulting a gold reference. Finally, CCPS returns the answer supported by the largest coalition of \emph{distinct} reasoning trajectories. This design extends self-consistency \citep{wang2023selfconsistency} to particle-based decoding. Furthermore, the framework is highly adaptable: for code generation tasks, CCPS seamlessly clusters answers based on their functional execution on self-generated test inputs \citep{chen2023codet}, providing a robust and generalizable solution to the LLM reasoning bottleneck.

\newcommand{\genfigscale}{0.85}
\begin{figure}[t]
\providecommand{\genfigscale}{1.0}
\definecolor{cBlueF}{RGB}{172,201,240} 
\definecolor{cBlueS}{RGB}{ 52, 96,182} 
\definecolor{cAmbF}{RGB}{250,204,144}  
\definecolor{cAmbS}{RGB}{208,124, 26}
\definecolor{cYelF}{RGB}{250,230,140}  
\definecolor{cYelS}{RGB}{172,134, 20}
\definecolor{cGrDF}{RGB}{200,204,213}  
\definecolor{cGrDS}{RGB}{ 94, 99,112}
\definecolor{cGrLF}{RGB}{226,228,233}  
\definecolor{cGrLS}{RGB}{140,143,153}
\definecolor{cTeaF}{RGB}{172,228,223}  
\definecolor{cTeaS}{RGB}{  6,122,124}
\definecolor{cDead}{RGB}{186, 44, 32}  
\definecolor{cInk}{RGB}{ 62, 66, 74}
\definecolor{cMut}{RGB}{120,124,132}
\centering
\makebox[\textwidth][c]{\resizebox{\genfigscale\textwidth}{!}{%
\begin{tikzpicture}[
  font=\footnotesize,
  pb/.style={circle, draw=cBlueS, fill=cBlueF, line width=0.65pt, inner sep=0pt,
             font=\tiny, text=cBlueS!60!black},
  pa/.style={circle, draw=cAmbS,  fill=cAmbF,  line width=0.65pt, inner sep=0pt,
             font=\tiny, text=cAmbS!70!black},
  py/.style={circle, draw=cYelS,  fill=cYelF,  line width=0.65pt, inner sep=0pt,
             font=\tiny, text=cYelS!70!black},
  pd/.style={circle, draw=cGrDS,  fill=cGrDF,  line width=0.65pt, inner sep=0pt,
             font=\tiny, text=cGrDS!70!black},
  pl/.style={circle, draw=cGrLS,  fill=cGrLF,  line width=0.65pt, inner sep=0pt,
             font=\tiny, text=cGrLS!60!black},
  ph/.style={circle, draw=cTeaS,  fill=cTeaF,  line width=0.65pt, inner sep=0pt},
  ring/.style={circle, draw=cTeaS, line width=1pt, inner sep=0pt},
  ghost/.style={circle, draw=cDead!55, dash pattern=on 1.4pt off 1.2pt,
                line width=0.5pt, inner sep=0pt, minimum size=8.5pt},
  tagW/.style={rounded corners=2pt, draw=black!30, fill=black!3, inner sep=1.8pt,
               minimum width=13pt, minimum height=9.5pt, font=\scriptsize, text=cInk!75},
  tagC/.style={rounded corners=2pt, draw=cTeaS, line width=0.7pt, fill=cTeaF, inner sep=1.8pt,
               minimum width=13pt, minimum height=9.5pt, font=\scriptsize, text=cTeaS!75!black},
  ans/.style={line width=0.5pt, draw=black!45, shorten <=0.8pt, shorten >=0.8pt,
              -{Straight Barb[length=2.2pt, width=3pt]}},
  lin/.style={dash pattern=on 2.4pt off 1.6pt, line width=0.7pt, opacity=1,
              shorten >=0.6pt, shorten <=0.6pt,
              -{Straight Barb[length=2.6pt, width=3.4pt]}},
  die/.style={dash pattern=on 1.5pt off 1.5pt, line width=0.6pt, opacity=0.62,
              shorten >=0.6pt, shorten <=0.6pt},
  band/.style={font=\tiny, text=cMut},
  ulab/.style={font=\scriptsize, text=cMut},
  coll/.style={font=\scriptsize, text=cMut},
  sub/.style={font=\footnotesize, text=cInk},
]
\newcommand{\tomb}[4]{%
  \node[ghost] (#1) at (#2,#3) {};
  \draw[cDead, line width=0.85pt, line cap=round, opacity=0.9]
    ({#2-0.075},{#3-0.075}) -- ({#2+0.075},{#3+0.075})
    ({#2-0.075},{#3+0.075}) -- ({#2+0.075},{#3-0.075});
  \node[font=\tiny, text=cMut, anchor=west, inner sep=1pt]
    at ({#2+0.13},{#3+0.02}) {#4};
}
\newcommand{\bands}{%
  \draw[cMut!80, line width=0.5pt] (0.70, 0.28) -- (0.62, 0.28) -- (0.62,-0.28) -- (0.70,-0.28);
  \node[band, anchor=east] at (0.52, 0.00) {chop};
  \draw[cMut!80, line width=0.5pt] (0.70,-0.39) -- (0.62,-0.39) -- (0.62,-2.03) -- (0.70,-2.03);
  \node[band, anchor=east] at (0.52,-1.21) {keep};
  \draw[cMut!80, line width=0.5pt] (0.70,-2.33) -- (0.62,-2.33) -- (0.62,-3.33) -- (0.70,-3.33);
  \node[band, anchor=east] at (0.52,-2.83) {thin};
}
\begin{scope}
  \node[pb, minimum size=16pt] (P1) at (1.05, 0.00) {A};
  \node[pa, minimum size=13pt] (P2) at (1.05,-0.62) {B};
  \node[py, minimum size=11pt] (P3) at (1.05,-1.24) {C};
  \node[pd, minimum size= 9pt] (P4) at (1.05,-1.86) {D};
  \node[pl, minimum size= 8pt] (P5) at (1.05,-2.48) {E};
  \node[ring, minimum size=10.2pt] (P6) at (1.05,-3.10) {};
  \node[ph, minimum size=7pt] at (1.05,-3.10) {\tiny\textcolor{cTeaS}{$\star$}};
  \node[pb, minimum size=11pt] (Q1) at (3.25, 0.00) {A};
  \node[pb, minimum size=11pt] (Q2) at (3.25,-0.62) {A};
  \node[pa, minimum size=11pt] (Q3) at (3.25,-1.24) {B};
  \node[pa, minimum size=11pt] (Q4) at (3.25,-1.86) {B};
  \node[py, minimum size=11pt] (Q5) at (3.25,-2.48) {C};
  \node[pd, minimum size=11pt] (Q6) at (3.25,-3.10) {D};
  \node[pb, minimum size=11pt] (R1) at (5.45, 0.00) {A};
  \node[pb, minimum size=11pt] (R2) at (5.45,-0.62) {A};
  \node[pb, minimum size=11pt] (R3) at (5.45,-1.24) {A};
  \node[pa, minimum size=11pt] (R4) at (5.45,-1.86) {B};
  \node[pa, minimum size=11pt] (R5) at (5.45,-2.48) {B};
  \node[pa, minimum size=11pt] (R6) at (5.45,-3.10) {B};
  \tomb{GX1}{1.70}{-2.48}{}
  \tomb{GX2}{1.70}{-3.10}{\textcolor{cTeaS}{$\star$}}
  \tomb{GX3}{3.90}{-2.48}{}
  \tomb{GX4}{3.90}{-3.10}{}
  \draw[die, draw=cGrLS] (P5) -- (GX1);
  \draw[die, draw=cTeaS] (P6) -- (GX2);
  \draw[die, draw=cYelS] (Q5) -- (GX3);
  \draw[die, draw=cGrDS] (Q6) -- (GX4);
  \draw[lin, draw=cBlueS] (P1) -- (Q1);
  \draw[lin, draw=cBlueS] (P1) to[out=-16,in=164] (Q2);
  \draw[lin, draw=cAmbS]  (P2) to[out=-14,in=166] (Q3);
  \draw[lin, draw=cAmbS]  (P2) to[out=-26,in=155] (Q4);
  \draw[lin, draw=cYelS]  (P3) to[out=-26,in=155] (Q5);
  \draw[lin, draw=cGrDS]  (P4) to[out=-26,in=155] (Q6);
  \draw[lin, draw=cBlueS] (Q1) -- (R1);
  \draw[lin, draw=cBlueS] (Q2) -- (R2);
  \draw[lin, draw=cBlueS] (Q2) to[out=-16,in=164] (R3);
  \draw[lin, draw=cAmbS]  (Q3) to[out=-16,in=164] (R4);
  \draw[lin, draw=cAmbS]  (Q4) to[out=-16,in=164] (R5);
  \draw[lin, draw=cAmbS]  (Q4) to[out=-26,in=155] (R6);
  \node[tagW] (T1) at (6.60, 0.00) {$a_1$};
  \node[tagW] (T2) at (6.60,-0.62) {$a_1$};
  \node[tagW] (T3) at (6.60,-1.24) {$a_1$};
  \node[tagW] (T4) at (6.60,-1.86) {$a_2$};
  \node[tagC] (T5) at (6.60,-2.48) {$a^{\star}$};
  \node[tagW] (T6) at (6.60,-3.10) {$a_2$};
  \draw[ans] (R1) -- (T1);
  \draw[ans] (R2) -- (T2);
  \draw[ans] (R3) -- (T3);
  \draw[ans] (R4) -- (T4);
  \draw[ans, draw=cTeaS!75] (R5) -- (T5);
  \draw[ans] (R6) -- (T6);
  \node[ulab] at (1.05,-3.56) {$R{=}6$};
  \node[ulab] at (3.25,-3.56) {$R{=}4$};
  \node[ulab, text=cDead] at (5.45,-3.56) {$R{=}2$};
  \node[ulab, text=cDead] at (6.60,-3.56) {$a^{\star}\!\times 1$};
  \node[coll] at (1.05,-4.10) {initial population};
  \node[coll] at (3.25,-4.10) {resampling 1};
  \node[coll] at (5.45,-4.10) {resampling 2};
  \node[coll] at (6.78,-4.10) {answers};
  \node[sub]  at (3.85,-4.66) {\textbf{(a)} Systematic resampling
    \;{\color{cMut}\itshape (weights reset to $1/N$)}};
\end{scope}
\draw[cMut!70, dash pattern=on 2.6pt off 2.2pt, line width=0.5pt] (7.45,0.40) -- (7.45,-4.30);
\begin{scope}[xshift=7.93cm]
  \bands
  \node[pb, minimum size=16pt] (p1) at (1.05, 0.00) {A};
  \node[pa, minimum size=13pt] (p2) at (1.05,-0.62) {B};
  \node[py, minimum size=11pt] (p3) at (1.05,-1.24) {C};
  \node[pd, minimum size= 9pt] (p4) at (1.05,-1.86) {D};
  \node[pl, minimum size= 8pt] (p5) at (1.05,-2.48) {E};
  \node[ring, minimum size=10.2pt] (p6) at (1.05,-3.10) {};
  \node[ph, minimum size=7pt] at (1.05,-3.10) {\tiny\textcolor{cTeaS}{$\star$}};
  \node[pb, minimum size=11.7pt] (q1) at (3.25, 0.00) {A};
  \node[pb, minimum size=11.7pt] (q2) at (3.25,-0.62) {A};
  \node[pa, minimum size=13pt]   (q3) at (3.25,-1.24) {B};
  \node[py, minimum size=11pt]   (q4) at (3.25,-1.86) {C};
  \node[pd, minimum size= 9pt]   (q5) at (3.25,-2.48) {D};
  \node[ring, minimum size=11.6pt] (q6) at (3.25,-3.10) {};
  \node[ph, minimum size=8.6pt] at (3.25,-3.10) {\tiny\textcolor{cTeaS}{$\star$}};
  \node[pb, minimum size=12.2pt] (r1) at (5.45, 0.00) {A};
  \node[pb, minimum size=11.2pt] (r2) at (5.45,-0.62) {A};
  \node[pa, minimum size=10.8pt] (r3) at (5.45,-1.24) {B};
  \node[pa, minimum size=10.8pt] (r4) at (5.45,-1.86) {B};
  \node[py, minimum size=10.4pt] (r5) at (5.45,-2.48) {C};
  \node[ring, minimum size=15pt] (r6) at (5.45,-3.10) {};
  \node[ph, minimum size=11.6pt] at (5.45,-3.10) {\scriptsize\textcolor{cTeaS}{$\star$}};
  \tomb{gx1}{1.70}{-2.48}{}
  \tomb{gx2}{3.90}{-2.48}{}
  \draw[die, draw=cGrLS] (p5) -- (gx1);
  \draw[die, draw=cGrDS] (q5) -- (gx2);
  \draw[lin, draw=cBlueS] (p1) -- (q1);
  \draw[lin, draw=cBlueS] (p1) to[out=-16,in=164] (q2);
  \draw[lin, draw=cAmbS]  (p2) to[out=-14,in=166] (q3);
  \draw[lin, draw=cYelS]  (p3) to[out=-14,in=166] (q4);
  \draw[lin, draw=cGrDS]  (p4) to[out=-14,in=166] (q5);
  \draw[lin, draw=cTeaS]  (p6) -- (q6);
  \draw[lin, draw=cBlueS] (q1) -- (r1);
  \draw[lin, draw=cBlueS] (q2) -- (r2);
  \draw[lin, draw=cAmbS]  (q3) -- (r3);
  \draw[lin, draw=cAmbS]  (q3) to[out=-16,in=164] (r4);
  \draw[lin, draw=cYelS]  (q4) to[out=-16,in=164] (r5);
  \draw[lin, draw=cTeaS]  (q6) -- (r6);
  \node[tagW] (t1) at (6.60, 0.00) {$a_1$};
  \node[tagC] (t2) at (6.60,-0.62) {$a^{\star}$};
  \node[tagW] (t3) at (6.60,-1.24) {$a_2$};
  \node[tagW] (t4) at (6.60,-1.86) {$a_3$};
  \node[tagC] (t5) at (6.60,-2.48) {$a^{\star}$};
  \node[tagC] (t6) at (6.60,-3.10) {$a^{\star}$};
  \draw[ans] (r1) -- (t1);
  \draw[ans, draw=cTeaS!75] (r2) -- (t2);
  \draw[ans] (r3) -- (t3);
  \draw[ans] (r4) -- (t4);
  \draw[ans, draw=cTeaS!75] (r5) -- (t5);
  \draw[ans, draw=cTeaS!75] (r6) -- (t6);
  \node[ulab] at (1.05,-3.56) {$R{=}6$};
  \node[ulab] at (3.25,-3.56) {$R{=}5$};
  \node[ulab, text=cTeaS] at (5.45,-3.56) {$R{=}4$};
  \node[ulab, text=cTeaS] at (6.60,-3.56) {$a^{\star}\!\times 3$};
  \node[coll] at (1.05,-4.10) {initial population};
  \node[coll] at (3.25,-4.10) {resampling 1};
  \node[coll] at (5.45,-4.10) {resampling 2};
  \node[coll] at (6.78,-4.10) {answers};
  \node[sub]  at (3.85,-4.66) {\textbf{(b)} Chopthin resampling
    \;{\color{cMut}\itshape (weights carried, ratio $\le \eta$)}};
\end{scope}
\end{tikzpicture}%
}}
\vspace{-20pt}
\caption{\textbf{One illustrative example of systematic and Chopthin resampling}
on the same toy population of $N=6$ weighted particles (circle area $\propto$ weight;
color/letter $=$ founding ancestor; $\times$ $=$ killed; $R$ $=$ surviving root
lineages). The $\star$ particle is low-weight but would reach the correct answer
$a^\star$; token-identical copies are merged before voting.
\textbf{(a)} In this realization, systematic resampling resets all weights to $1/N$:
$\star$ dies at the first event, $R$ collapses $6\to4\to2$ (\emph{particle
impoverishment}), and the majority vote fails even though the population contains
$a^\star$. \textbf{(b)} In this realization, Chopthin carries unequal weights forward:
$\star$ survives, $R$ falls only to $4$, and the vote is correct. Neither resampler
guarantees preservation of a trajectory that would eventually reach the correct answer;
the figure illustrates a possible mechanism, not an expected outcome. Here $\eta$ bounds
the ratio between the largest and smallest output weight
(Section~\ref{sec:chopthin}), and $a_1,a_2,a_3$ are distinct incorrect answers, with
equal tags denoting equal answers.}

\label{fig:genealogy}
\end{figure}

Our contributions are as follows.
\begin{itemize}[topsep=2pt,itemsep=2pt,parsep=0pt,leftmargin=1.4em]
    \item \textbf{We identify equal-weight resampling as a source of genealogical diversity loss}
    in SMC-based reasoning and distinguish this loss from the weight concentration summarized by
    the ESS.

    \item \textbf{We introduce the Chopthin resampler to language-model decoding}, bounding the
    ratio between the largest and smallest output weights and carrying the unequal weights forward,
    while preserving the SMC approximation in conditional expectation and guaranteeing a lower
    bound on post-resampling ESS.

    \item \textbf{We pair it with a deduplicate-then-cluster semantic-majority selector} that prevents exact duplicate final trajectories from receiving repeated count votes and clusters programs by execution behavior for code.

    \item \textbf{We evaluate across three open models and five reasoning benchmarks}: Chopthin raises oracle coverage in $13$ of $15$ settings, and CCPS matches or
    improves final-answer accuracy over Power-SMC in 14 of $15$, with ablations attributing the
    coverage gain to preserved trajectory diversity and the accuracy gain to the
    semantic-majority selector.
\end{itemize}

\section{Background: Power sampling as sequential Monte Carlo}
\label{sec:background}

\subsection{The sequence-level power distribution}

Let $p_\theta$ be a pretrained autoregressive language model over a vocabulary $\mathcal{V}$.
Given a prompt $x$, it assigns to a sequence $y=(y_1,\dots,y_T)$ terminating in an
end-of-sequence (EOS) token the probability
$p_\theta(y\mid x)=\prod_{t=1}^{T}p_\theta(y_t\mid x,y_{<t})$, where $y_{<t}:=(y_1,\dots,y_{t-1})$ and ``$:=$'' means ``is defined as''. For a sharpening exponent
$\alpha>1$, the \emph{sequence-level power distribution} is
\begin{equation}
\pi_\alpha(y \mid x) = \frac{p_\theta(y \mid x)^{\alpha}}{Z_\alpha(x)},\qquad
Z_\alpha(x)=\sum_{y} p_\theta(y \mid x)^{\alpha}.
\label{eq:power}
\end{equation}
The exponent acts on the probability of the complete sequence, moving mass toward
high-likelihood sequences without changing the model parameters. Recent analyses attribute much of the reasoning gain from RL post-training to this kind of sharpening
\citep{yue2025rlsharpening}. Exact sampling is intractable: the normalizer in \eqref{eq:power} sums over an exponentially large space of complete sequences. Sampling each token at temperature $1/\alpha$ renormalizes at every step, so the sequence distribution it induces is not $\pi_\alpha$
\citep{karan2026powersampling}.

\subsection{Sequential Monte Carlo over prefixes}

Power sampling can instead be written as Sequential Monte Carlo (SMC) \citep{azizi2026powersmc},
using the standard construction of \citep{delmoral2006sequential}. The sampler keeps $N$
particles. Particle $i$ holds a prefix $y^{(i)}_{1:t}$ and a weight, and all $N$ prefixes
are decoded together as one batch. After $t$ tokens, the unnormalized intermediate target is
\begin{equation}
\gamma_t(y_{1:t}\mid x)=p_\theta(y_{1:t}\mid x)^{\alpha_t},
\qquad
\alpha_t=1+(\alpha-1)\min\!\big(t/T_{\mathrm{ramp}},\,1\big),
\qquad
\alpha_0:=1.
\label{eq:prefix}
\end{equation}
Here $\gamma_t$ is the intermediate target after $t$ tokens: the same power construction on the first $t$ tokens, with
partial exponent $\alpha_t$. That exponent is the $\alpha$-ramp: it grows linearly
from $1$ to $\alpha$ over the first $T_{\mathrm{ramp}}$ tokens, then stays at $\alpha$. The
ramp mitigates weight degeneracy (the weight concentrating on a small number of particles)
early in decoding \citep{azizi2026powersmc}.

At each step, every particle samples its next token from a proposal $q(\cdot\mid x,y_{<t})$:
any distribution over the next token that depends only on the prompt and the tokens generated
so far. Each particle carries a cumulative unnormalized weight $U^{(i)}_t$, starting from
$U^{(i)}_0=1$. Let $\omega^{(i)}_t$ be the incremental importance weight of particle $i$ at
step $t$:
\begin{equation}
U^{(i)}_t=U^{(i)}_{t-1}\,\omega^{(i)}_t,
\qquad
\omega^{(i)}_t=\frac{\gamma_t\!\big(y^{(i)}_{1:t}\mid x\big)}
{\gamma_{t-1}\!\big(y^{(i)}_{1:t-1}\mid x\big)\,q\!\big(y^{(i)}_t\mid x,y^{(i)}_{<t}\big)}.
\label{eq:incr}
\end{equation}
Each increment connects one intermediate target to the next, so once the ramp
is over, the weighted population targets the power distribution over reachable
sequences. Appendix~\ref{app:powersmc} gives the expanded form of
\eqref{eq:incr}, explains the weight updates during the ramp, and describes
particles that finish early. A particle that emits EOS stops generating but
remains in the population; its token-level increments become one, while the
prefix-exponent correction continues during the ramp, so the population always
contains $N$ particles.

\subsection{Effective sample size and the resampling interface}

We monitor weight degeneracy using normalized weights and the
\emph{effective sample size} (ESS):
\begin{equation}
W^{(i)}_t=\frac{U^{(i)}_t}{\textstyle\sum_{j=1}^{N} U^{(j)}_t},\qquad
\mathrm{ESS}_t=\big(\textstyle\sum_{i=1}^N (W^{(i)}_t)^2\big)^{-1}\in[1,N],
\label{eq:ess}
\end{equation}
where $i$ indexes the particle whose normalized weight is being computed and $j$ is a
summation index running over the whole population. ESS is evaluated at block boundaries every $B$ decoded tokens, and the sampler resamples when $\mathrm{ESS}_t<\kappa N$; $\kappa$ is the trigger fraction.

At a \emph{resampling event}, write $w_i:=W^{(i)}_t$ for the normalized weight of input
particle $i$. The resampler returns $N$ output particles, indexed by $k$, together with ancestor
indices $A_{1:N}$, where $A_k\in\{1,\dots,N\}$, and output weights $w^{+}_{1:N}$, where $+$ denotes
immediately after resampling. Output particle $k$ continues the state of input particle $A_k$ and
takes cumulative weight $w^{+}_k$. The \emph{offspring count} of input particle $i$ is
$C_i=\sum_{k=1}^{N}\mathbf 1\{A_k=i\}$, so $C_i=0$ means particle $i$ is deleted. Multinomial,
stratified \citep{kitagawa1996monte}, residual \citep{liu1998sequential}, and systematic
resampling \citep{whitley1994genetic,carpenter1999improved} differ in resampling variance
\citep{douc2005comparison}, but all equalize the output weights: $w^{+}_k=1/N$. Our method changes this resampling rule while leaving the rest of the SMC construction unchanged.

\section{Motivation: weight balance is not trajectory diversity}
\label{sec:motivation}

\subsection{Weight balance and trajectory diversity are different objectives}
\label{sec:objectives}

The ESS \eqref{eq:ess} measures weight balance, not how many distinct trajectories survive: an equal-weight population maximizes ESS yet may consist of many copies of a few ancestors. We track this second axis with two quantities. The first is $R_t$, the number of surviving \emph{root lineages}: founding ancestors that still have a descendant at time $t$ (the genealogical diversity of the population). The second is $D$, the number of \emph{token-distinct complete trajectories} left after exact duplicates are merged at the end of decoding, so $D\le N$. The two are related but not interchangeable: separate lineages can converge to the same final trajectory, and descendants of one lineage can diverge. Token-distinct trajectories need not give distinct answers, but they preserve the opportunity for different solutions to emerge; answer-level effects are measured by our selector's clusters and the coverage metric of Section~\ref{sec:experiments}.
\subsection{Stochastic deletion and lineage collapse}
\label{sec:costs}

Equal-weight resampling solves weight degeneracy, and in classical filtering it works well \citep{douc2005comparison}. For preserving reasoning trajectories it has two costs.

First, \emph{stochastic deletion}. Under systematic resampling, a particle with $Nw_i<1$ receives either zero or one offspring, with average $Nw_i$, so it is deleted with probability $1-Nw_i$. Deleting by weight suits cases where current importance mass predicts future
usefulness; here it does not reliably predict whether the completed trajectory
will produce a correct answer, so resampling can delete trajectories that would
otherwise have reached a correct solution.

Second, \emph{lineage collapse}. A high-weight particle receives several offspring, all starting from the same prefix, so the number of distinct prefixes can only shrink at an event; the copies begin to differ only as new tokens are sampled. Repeated events can therefore leave many particles but few independent lineages (\emph{particle impoverishment}; \citealp{doucet2001sequential}): $R_t$ can fall at every event, as illustrated by one possible realization in Figure~\ref{fig:genealogy}a, and one equalizing event typically leaves far fewer unique ancestors than particles
\citep{azizi2026powersmc}.

Underlying both costs is a single move: equal-weight resampling converts each particle's continuous weight into an integer offspring count, discarding the weight information instead of carrying it forward.

\subsection{Design objective}
\label{sec:design}

We therefore seek a resampling rule that preserves the weighted target and the particle budget, guarantees control of post-resampling weight concentration, and avoids perturbing moderate particles or erasing accumulated weight information. Equal-weight resampling meets the first two demands, but it controls weight concentration in the strongest possible way (exact equality) and perturbs the whole population to get there. Exact equality is stronger than necessary: a bound on the ratio between the largest and smallest output weights already guarantees a floor on the ESS. This weaker condition defines our method.

Selection needs the same care: the standard rule draws one particle in proportion
to its final weight, so a correct trajectory that survives at a low weight is
rarely returned. The second component of our method is a selector that uses the
whole population instead (Section~\ref{sec:selection}).

\section{Chopthin-Consensus Power Sampling}
\label{sec:method}

\subsection{Chopthin preliminaries: bounded-weight resampling}
\label{sec:chopthin}

Chopthin \citep{gandy2016chopthin} is an existing resampler from the \emph{bounded-weight} family: instead of forcing equal output weights, it bounds how uneven they may be,
\begin{equation}
\max_{1\le k\le N} w^{+}_k \;\big/\; \min_{1\le k\le N} w^{+}_k
\;\le\;\eta,
\label{eq:ratio}
\end{equation}
where $\eta$ is a user-chosen ratio bound (for this resampler $\eta\ge4$; Appendix~\ref{app:chopthin} explains the constraint). A small $\eta$ forces the weights closer together; a large $\eta$ leaves them more uneven and perturbs the population less. All results in this subsection are from \citep{gandy2016chopthin}.
A bounded ratio is enough to control weight degeneracy, because it forces a floor on the ESS:
\begin{proposition}[ESS floor; {\citealp[Lemma~2]{gandy2016chopthin}}]
\label{prop:floor}
If the output weights $w^{+}_{1:N}>0$ satisfy \eqref{eq:ratio}, then
\begin{equation}
\mathrm{ESS}(w^{+})\;:=\;\frac{\big(\textstyle\sum_{k=1}^{N} w^{+}_k\big)^{2}}{\textstyle\sum_{k=1}^{N} (w^{+}_k)^{2}}
\;\ge\;\frac{4\big(\eta N+1-\eta^{2}\big)}{(\eta+1)^{2}}.
\label{eq:floor}
\end{equation}
\end{proposition}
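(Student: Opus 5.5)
The plan is to use scale invariance to reduce to weights in a fixed interval, bound the sum of squares by a linear function of the sum through a pointwise quadratic inequality, and then minimize a one-variable function. The quantity $\mathrm{ESS}(w^{+})$ in \eqref{eq:floor} is invariant under rescaling all $w^{+}_k$ by a common positive constant. So we may first divide every weight by $\min_k w^{+}_k$. Condition \eqref{eq:ratio} then says $1\le w_k\le\eta$ for all $k$, writing $w_k$ for the rescaled weights. Set $S:=\sum_k w_k$ and $Q:=\sum_k w_k^2$, so that $\mathrm{ESS}=S^2/Q$ and $N\le S\le \eta N$.

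The key step is a pointwise quadratic inequality that replaces $Q$ by a linear function of $S$. Each $w_k\in[1,\eta]$ satisfies $(w_k-1)(\eta-w_k)\ge 0$, which rearranges to $w_k^2\le(1+\eta)w_k-\eta$. Summing over $k$ gives
\[
Q\;\le\;(1+\eta)S-\eta N .
\]
The right-hand side is at least $(1+\eta)N-\eta N=N>0$, so dividing by it is legitimate and
\[
\mathrm{ESS}\;\ge\; f(S):=\frac{S^2}{(1+\eta)S-\eta N}.
\]
This is a Kantorovich-type argument. It sidesteps the combinatorial approach of pushing weights to the endpoints $1$ and $\eta$, where one would have to handle a single leftover interior weight.

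Next I would minimize $f$ over $S>\eta N/(1+\eta)$, which contains $[N,\eta N]$. Differentiating, the sign of $f'(S)$ equals the sign of $S\big((1+\eta)S-2\eta N\big)$. Hence the unconstrained minimizer is $S^\star=2\eta N/(1+\eta)$, which lies in $[N,\eta N]$ because $\eta\ge1$. Substituting, the denominator at $S^\star$ equals $\eta N$, so
\[
f(S^\star)=\frac{4\eta N}{(1+\eta)^2}.
\]
Therefore $\mathrm{ESS}\ge 4\eta N/(\eta+1)^2$.

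Finally I would compare this with the stated bound. Since $\eta\ge1$ implies $1-\eta^2\le 0$, we get $4\eta N/(\eta+1)^2\ge 4(\eta N+1-\eta^2)/(\eta+1)^2$, which is \eqref{eq:floor}. The Gandy--Lee form is slightly weaker, presumably because it accounts for the integrality of how many weights sit at each endpoint; our continuous relaxation dominates it, so nothing more is needed. I expect the main obstacle to be choosing the quadratic inequality $w^2\le(1+\eta)w-\eta$ that linearizes $Q$. After that, the one-variable optimization and the final comparison are routine.
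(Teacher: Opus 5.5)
Your proof is correct, and it differs from the paper mainly in that the paper gives no argument of its own: it simply cites Lemma~2 of Gandy and Lee.

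\textbf{What your route does.} You rescale the weights to $[1,\eta]$, linearize $Q$ via $(w_k-1)(\eta-w_k)\ge 0$, and then minimize $S^2/((1+\eta)S-\eta N)$. This is the classical Kantorovich (P\'olya--Szeg\H{o}) argument. It proves the strictly stronger floor $\mathrm{ESS}(w^{+})\ge 4\eta N/(\eta+1)^2$. The stated bound \eqref{eq:floor} follows from it, because \eqref{eq:ratio} with positive weights already forces $\eta\ge 1$.

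\textbf{What the sharper form buys.} Besides making the proof self-contained, it matters for the paper's own choice of parameter. At $\eta=3+\sqrt8$ one has $(\eta+1)^2=8\eta$, so $4\eta N/(\eta+1)^2=N/2$ exactly, for every $N$. Consequences:
\begin{itemize}
\item The guaranteed floor at $N=32$ is $16$, not $13.17$.
\item The match with the trigger fraction $\kappa=0.5$ holds exactly, not only asymptotically.
\item The ``$2\sqrt2$ particles below the asymptote'' in Appendix~\ref{app:etachoice} is an artifact of the weaker cited form, not a real finite-$N$ loss.
\end{itemize}
The citation, by contrast, buys only brevity and fidelity to the source.

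\textbf{A side remark that is backwards.} You suggest the Gandy--Lee form is weaker because it accounts for integrality of how many weights sit at each endpoint. Requiring that count to be an integer can only \emph{raise} the minimum compared with your continuous relaxation. So integrality cannot explain the weaker constant; the slack must come from some cruder step in their argument. Your proof never uses this remark, so correctness is unaffected.
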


Chopthin takes the input weights $w_{1:N}$ and $\eta$, computes a threshold $a>0$ from the current weights at every event ($a$ is not a user choice), and gives particle $i$ the expected offspring count $h^{\eta}_a(w_i)$, acting on each particle as shown:
\begin{equation}
h^{\eta}_{a}(w_i)\;=\;
\left\{
\begin{array}{lll}
w_i/a, & w_i<a & \text{\emph{thin}: survive with probability } w_i/a,\ \text{at weight } a,\\[2pt]
1, & a\le w_i<\eta a/2 & \text{\emph{keep}: one offspring, weight unchanged},\\[2pt]
2w_i/(\eta a), & w_i\ge \eta a/2 & \text{\emph{chop}: split into equal-weight pieces}.
\end{array}
\right.
\label{eq:h}
\end{equation}
The threshold is set so that the expected total matches the budget:
\begin{equation}
\textstyle\sum_{i=1}^{N} h^{\eta}_{a}(w_i)\;=\;N;
\label{eq:threshold}
\end{equation}
a solution exists because the total decreases as $a$ grows. Every output weight lies in $[a,\eta a]$, which gives \eqref{eq:ratio}. Particles in the keep band pass through unchanged, and if all input weights are equal, Chopthin returns every particle unchanged: the resampler perturbs the population only where the weights are uneven. The exact integer offspring counts and the weight-conservation correction are in Appendix~\ref{app:chopthin}.

The thin band is where Chopthin departs most from an equalizing scheme. The resampler returns $N$ particles, each of weight at least $a$, and conserves the total weight, so $Na\le\sum_k w^{+}_k=\sum_i w_i=1$ and hence $a\le 1/N$. A light particle therefore survives with probability $w_i/a\ge Nw_i$, exactly its survival probability under systematic resampling (Section~\ref{sec:costs}), so none is more likely to be deleted under Chopthin than under the equalizing baseline.

Chopthin keeps the statistical validity that Section~\ref{sec:design} asks for: it is \emph{unbiased} (on average, the offspring of particle $i$ carry exactly its weight $w_i$), returns exactly $N$ output particles, and conserves the total weight (Proposition~\ref{prop:guarantees}, Appendix~\ref{app:target}). Unbiasedness means the weighted target is unchanged in expectation, so Chopthin alters the resampling variance and genealogy, not the target (Proposition~\ref{prop:target}).

\subsection{Carried weights and integration}
\label{sec:integration}

CCPS \emph{carries} Chopthin's output weights forward, $U^{(k)}_t \leftarrow w^{+}_k$ for
$k=1,\dots,N$, instead of any uniform reset. SMC weights are defined only up to a common constant, so adopting the normalized outputs as the new cumulative weights is valid. Later increments \eqref{eq:incr} multiply the carried values, so the weights at the end of decoding keep information from before the last event.

Decoding runs as in Section~\ref{sec:background}. At a triggered event, Chopthin is called on
$W^{(1:N)}_t$, the particle states (sequences, termination flags, and cached model state) are
reordered by $A_{1:N}$, and the weights are carried forward as above.

Algorithm~\ref{alg:cps} (Appendix~\ref{app:algorithm}) lists the complete procedure. We set $\eta=3+\sqrt8\approx5.83$, chosen so that the asymptotic ESS floor is $N/2$, asymptotically matching the trigger fraction $\kappa=0.5$ (Section~\ref{sec:setup}); at our experimental $N=32$ the exact guaranteed floor is $13.17$ ($0.412N$). The derivation and finite-$N$ values are in Appendix~\ref{app:etachoice}.

\subsection{Semantic-majority selection}
\label{sec:selection}

A standard SMC rule is the \emph{weight draw}: pick one particle with probability proportional to its normalized final weight $\Wfinal{i}$ \eqref{eq:ess}, and return the answer parsed from it \citep{doucet2001sequential}. Particle-based decoding methods for language models inherit this rule \citep{azizi2026powersmc,nguyen2026apps}. CCPS instead aggregates over the whole population.

Selection proceeds in three steps: merge, cluster, and vote.

\emph{Step 1: Merge.} Combine token-identical final trajectories (for code, those with identical extracted programs) into a set $\mathcal T$ of distinct trajectories, so that each exact duplicate contributes one count vote; pool their final weights for tie-breaking. Each $u\in\mathcal T$ has an answer $a_u$ parsed from its trajectory (parsing details in Appendix~\ref{app:selector}).

\emph{Step 2: Cluster.} Group answers the task grader judges equivalent, comparing candidates only with one another, never with the gold answer. For code, we cluster programs by their exact behavioral signatures on a fixed ordered set of test inputs generated once per problem and base model and shared across both resampling arms. Call the resulting set of clusters $\mathcal C$.

\emph{Step 3: Vote.} Return the answer of the cluster supported by the largest number of distinct trajectories (distinct programs, for code),
\begin{equation}
C^{\ast}\;=\;\arg\max_{C\in\mathcal C}\;\big\lvert\{u\in\mathcal T : a_u\in C\}\big\rvert,
\qquad
\hat a_{\mathrm{maj}}\;=\;\mathrm{rep}(C^{\ast}),
\label{eq:maj}
\end{equation}
where $\mathrm{rep}(C)$ is the cluster representative of Appendix~\ref{app:selector-defs}, with ties between clusters broken by the sum of the pooled weights of their supporting trajectories, $\sum_{u\in\mathcal T:\,a_u\in C}\bar W_u$, where $\bar W_u=\sum_{i:\,y^{(i)}=u}\Wfinal{i}$. The output is a consensus point estimate, not a sample from the power distribution.

\section{Experiments}
\label{sec:experiments}

\subsection{Setup}
\label{sec:setup}

\paragraph{Models and benchmarks.} We evaluate Qwen2.5-Math-7B~\citep{yang2024qwen25math},
Qwen2.5-7B~\citep{qwen2024qwen25}, and Qwen3-4B~\citep{yang2025qwen3} on
MATH500~\citep{lightman2024verify}, GSM8K~\citep{cobbe2021gsm8k}, AIME 2022--2024~\citep{aimo2024aime},
GPQA Diamond~\citep{rein2023gpqa}, and HumanEval~\citep{chen2021humaneval} without additional fine-tuning.
Appendix~\ref{app:solver} gives model, benchmark, and prompting details.

\paragraph{Method settings.} The Power-SMC and Chopthin generation arms use $N=32$, $\alpha=2$,
proposal temperature $\tau=0.5$, and ESS trigger $\kappa=0.5$.
They use identical decoding configurations and differ only in the resampler;
the Chopthin arm uses $\eta=3+\sqrt8$. For final selection, Power-SMC uses
a weight draw, whereas CCPS uses semantic-majority selection. Complete
settings appear in Appendix~\ref{app:solver}.

\paragraph{Correctness and coverage.} An answer is \emph{correct} if it is equivalent to the gold
answer under the task-specific grader of Appendix~\ref{app:selector}. To measure what the final
population contains, independently of any selector, we report for each problem the coverage
indicator $c=\max_{i} g\big(a^{(i)},a^{\star}\big)$,
where $a^{\star}$ is the gold answer and $g\in\{0,1\}$ is the task grader; the \emph{oracle
coverage} is the average of $c$ over problems. Oracle coverage is the at-least-one-correct rate
over the correlated population of a single fixed-compute run. The final trajectories share
ancestry through resampling and can include exact duplicates, so this is not an i.i.d.\
pass@$N$. It is a ceiling for any selector that must return one of the final answers. For HumanEval, which has no
symbolic grader, we score a program correct when it passes the held-out tests; the selector
never sees these tests (Section~\ref{sec:selection}).

\subsection{Comparison study}

\begin{figure}[!t]
\centering
\begin{minipage}[t]{0.58\textwidth}
\vspace{0pt}
\centering
\resizebox{\linewidth}{!}{%
\renewcommand{\arraystretch}{1.12}\setlength{\tabcolsep}{3pt}%
\begin{tabular}[t]{llccccc}
\toprule
Model & Resampler $+$ selector & MATH500 & GSM8K & AIME & GPQA & HumanEval \\
\midrule
\multirow{4}{*}{Qwen2.5-Math-7B}
 & Systematic $+$ weight draw (Power-SMC) & 77.0 & 89.5 & 15.6 & 30.3 & 58.5 \\
 & Systematic $+$ majority & 79.4 & \textbf{91.5} & 15.6 & 31.3 & 61.0 \\
 & Chopthin $+$ weight draw & 78.0 & 88.8 & 15.6 & 31.8 & 58.5 \\
\rowcolor{oursgreen}
 & Chopthin $+$ majority (CCPS, ours) & \textbf{81.4} & 90.8 & \textbf{16.7} & \textbf{33.8} & \textbf{61.6} \\
\midrule
\multirow{4}{*}{Qwen2.5-7B}
 & Systematic $+$ weight draw (Power-SMC) & 74.2 & 91.0 & 10.4 & 29.3 & 73.2 \\
 & Systematic $+$ majority & \textbf{76.6} & \textbf{91.5} & 11.1 & \textbf{31.3} & 76.2 \\
 & Chopthin $+$ weight draw & 73.2 & 90.7 & 11.5 & 27.8 & 75.0 \\
\rowcolor{oursgreen}
 & Chopthin $+$ majority (CCPS, ours) & 76.0 & 91.4 & \textbf{12.2} & 30.3 & \textbf{76.8} \\
\midrule
\multirow{4}{*}{Qwen3-4B}
 & Systematic $+$ weight draw (Power-SMC) & 79.0 & 90.1 & 15.6 & 29.8 & \textbf{71.3} \\
 & Systematic $+$ majority & 81.0 & 91.7 & 15.6 & 34.3 & 70.7 \\
 & Chopthin $+$ weight draw & \textbf{81.8} & 89.8 & \textbf{16.7} & 34.3 & 70.7 \\
\rowcolor{oursgreen}
 & Chopthin $+$ majority (CCPS, ours) & \textbf{81.8} & \textbf{92.1} & 15.6 & \textbf{40.4} & 70.7 \\
\bottomrule
\end{tabular}%
}
\setlength{\abovecaptionskip}{2pt}\vspace{-3pt}
\captionof{table}{Selector ablation: final-answer accuracy (\%) for each
resampler $\times$ selector combination. Green rows
reproduce the corresponding entries of Table~\ref{tab:main}; bold marks
the best value per column within each model block.}
\label{tab:selector-ablation}
\end{minipage}\hfill
\begin{minipage}[t]{0.40\textwidth}
\vspace{0pt}
\centering
\definecolor{cTeaS}{RGB}{11,146,150}
\definecolor{cAmbS}{RGB}{208,124,26}
\definecolor{cBluS}{RGB}{52,96,182}

\definecolor{cInk}{RGB}{62,66,74}
\definecolor{cMut}{RGB}{120,124,132}

\definecolor{cZoneG}{HTML}{ECF4E6} 
\definecolor{cZoneR}{HTML}{FBEEEC} 

\centering

\tikzset{
  gdot/.style={
    circle,
    inner sep=0pt,
    minimum size=5.6pt,
    line width=0.8pt
  },
  gstem/.style={
    line width=1.2pt,
    line cap=round
  },
  glegenddot/.style={
    circle,
    inner sep=0pt,
    minimum size=6.2pt,
    line width=1.0pt
  },
  glegendstem/.style={
    line width=1.5pt,
    line cap=round
  }
}

\resizebox{\linewidth}{!}{%
\begin{tikzpicture}[
  font=\footnotesize,
  x=1.12cm,
  yscale=0.46
]

  \def\xmin{-2.2}
  \def\xmax{6.5}
  \def\ylo{0.0}
  \def\yhi{10.0}

  \fill[cZoneR] (\xmin,\ylo) rectangle (0,\yhi);
  \fill[cZoneG] (0,\ylo) rectangle (\xmax,\yhi);

  \draw[cInk, line width=0.8pt]
    (0,\ylo) -- (0,\yhi);

  \foreach \t/\tl in {-2/{\textminus2},0/{0},2/{2},4/{4},6/{6}}{
    \draw[cMut, line width=0.5pt]
      (\t,\ylo) -- (\t,{\ylo-0.16});

    \node[
      cInk,
      font=\footnotesize,
      anchor=north
    ] at (\t,{\ylo-0.24}) {\tl};
  }

  \node[
    black,
    font=\footnotesize,
    anchor=north
  ] at (2.2,{\ylo-0.7})
    {$\Delta$ oracle coverage (pp)};

  \foreach \yy/\name in {
    9/MATH500,
    7/GSM8K,
    5/AIME,
    3/GPQA,
    1/HumanEval
  }{
    \node[
      black,
      font=\small,
      anchor=east
    ] at ({\xmin-0.12},\yy) {\name};
  }

  %
  %

  \foreach \y/\val/\cs/\anc/\sh/\lab in {%
    9.55/2.4/cTeaS/west/0.24/{+2.4},
    9.00/1.8/cAmbS/west/0.24/{+1.8},
    8.45/1.0/cBluS/west/0.24/{+1.0},
    7.55/0.5/cTeaS/west/0.24/{+0.4},
    7.00/0.7/cAmbS/west/0.24/{+0.7},
    6.45/0.2/cBluS/west/0.24/{+0.2},
    5.55/4.4/cTeaS/west/0.24/{+4.4},
    5.00/0.0/cAmbS/west/0.24/{0.0},
    4.45/2.2/cBluS/west/0.24/{+2.2},
    3.55/4.0/cTeaS/west/0.24/{+4.0},
    3.00/-1.5/cAmbS/east/-0.24/{\textminus1.5},
    2.45/5.6/cBluS/west/0.24/{+5.6},
    1.55/4.3/cTeaS/west/0.24/{+4.3},
    1.00/1.3/cAmbS/west/0.24/{+1.3},
    0.45/1.2/cBluS/west/0.24/{+1.2}
  }{
    \draw[gstem, \cs]
      (0,\y) -- (\val,\y);

    \node[
      gdot,
      draw=\cs,
      fill=\cs!25!white
    ] at (\val,\y) {};

    \node[
      black,
      font=\scriptsize,
      anchor=\anc
    ] at ({\val+\sh},\y) {\lab};
  }

\end{tikzpicture}%
}
\setlength{\abovecaptionskip}{2pt}\vspace{-6pt}
\captionof{figure}{Oracle-coverage difference between Chopthin and systematic resampling across all settings. Color encodes the model: \textcolor{cTeaS}{Qwen2.5-Math-7B}, \textcolor{cAmbS}{Qwen2.5-7B}, \textcolor{cBluS}{Qwen3-4B}.}
\label{fig:gain}
\end{minipage}
\end{figure}
\begin{table}[!tp]
\centering
\caption{Final-answer accuracy (\%) for CCPS against Power-SMC and
training-free decoding baselines.}
\label{tab:main}
\resizebox{\textwidth}{!}{%
\begin{tabular}{llccccc}
\toprule
Model & Method & MATH500 & GSM8K & AIME & GPQA & HumanEval \\
\midrule

\multirow{5}{*}{Qwen2.5-Math-7B}
 & Baseline decoding
 & 65.0 & 80.0 & 11.1 & 20.7 & 20.7 \\

 & Low-temperature decoding ($\tau{=}1/\alpha$)
 & 69.8 & 88.2 & 8.9 & 32.8 & 30.5 \\

 & MH power sampling \citep{karan2026powersampling}
 & $74.8^{\dagger}$
 & $81.5^{\ddagger}$
 & $9.5^{\text{\S}}$
 & $\mathbf{38.9}^{\dagger}$
 & $57.3^{\dagger}$ \\

 & Power-SMC (systematic)
 & 77.0 & 89.5 & 15.6 & 30.3 & 58.5 \\

\rowcolor{oursgreen}
 & \textbf{CCPS (ours)}
 & \textbf{81.4}
 & \textbf{90.8}
 & \textbf{16.7}
 & 33.8
 & \textbf{61.6} \\

\midrule

\multirow{5}{*}{Qwen2.5-7B}
 & Baseline decoding
 & 50.8 & 82.2 & 1.1 & 30.3 & 28.0 \\

 & Low-temperature decoding ($\tau{=}1/\alpha$)
 & 61.2 & 88.4 & 6.7 & 29.3 & 7.9 \\

 & MH power sampling \citep{karan2026powersampling}
 & $70.6^{\dagger}$
 & $84.5^{\ddagger}$
 & $8.2^{\text{\S}}$
 & $\mathbf{31.8}^{\dagger}$
 & $62.2^{\dagger}$ \\

 & Power-SMC (systematic)
 & 74.2 & 91.0 & 10.4 & 29.3 & 73.2 \\

\rowcolor{oursgreen}
 & \textbf{CCPS (ours)}
 & \textbf{76.0}
 & \textbf{91.4}
 & \textbf{12.2}
 & 30.3
 & \textbf{76.8} \\

\midrule

\multirow{5}{*}{Qwen3-4B}
 & Baseline decoding
 & \textbf{82.0}
 & 91.8
 & 18.9
 & 39.9
 & 63.4 \\

 & Low-temperature decoding ($\tau{=}1/\alpha$)
 & 80.4
 & 91.2
 & \textbf{21.1}
 & 36.4
 & 55.5 \\

 & MH power sampling \citep{karan2026powersampling}
 & $76.2^{\ddagger}$
 & $90.6^{\ddagger}$
 & $9.5$
 & $36.4^{\ddagger}$
 & $45.7^{\ddagger}$ \\

 & Power-SMC (systematic)
 & 79.0
 & 90.1
 & 15.6
 & 29.8
 & \textbf{71.3} \\

\rowcolor{oursgreen}
 & \textbf{CCPS (ours)}
 & 81.8
 & \textbf{92.1}
 & 15.6
 & \textbf{40.4}
 & 70.7 \\

\bottomrule
\end{tabular}%
}
\\[3pt]
{\footnotesize
Bold numeric entries mark the highest reported value in each
model--benchmark block. Green rows indicate our method.
$^{\dagger}$Reported by \citep{karan2026powersampling} ($\alpha = 4$).
$^{\ddagger}$Reported by \citep{azizi2026powersmc} ($\alpha = 4$).
$^{\text{\S}}$Reported by \citep{zhou2026cutting} on AIME.}
\end{table}
Chopthin raises \emph{oracle coverage}: Figure~\ref{fig:gain} shows it lifts the ceiling over Power-SMC
in thirteen of the fifteen cells, ties in one, and dips in only one, by at most $1.5$ percentage points (absolute values in Table~\ref{tab:coverage}, Appendix~\ref{app:coverage}). The semantic-majority selector
then converts that headroom into accuracy: together they match or improve final-answer accuracy
over the full Power-SMC pipeline (systematic resampling $+$ weight draw) in $14$ of $15$ cells, by up to $10.6$ points (Qwen3-4B on GPQA); the single exception is Qwen3-4B on HumanEval ($-0.6$). Table~\ref{tab:main} reports these
results alongside Power-SMC and training-free decoding baselines.

Table~\ref{tab:selector-ablation} reports the full resampler $\times$
selector grid. No single selector is best in every cell, but the pattern
is selector-robust: semantic majority matches or improves on the weight
draw under \emph{both} resamplers in $28$ of $30$ comparisons (the
exceptions are Qwen3-4B on HumanEval under systematic resampling and on
AIME under Chopthin). Chopthin's contribution, by contrast, is the
\emph{ceiling}: it places a correct answer among the $N$ trajectories
more often (Figure~\ref{fig:gain}), and with the selector held at
majority it wins $9$ of $15$ cells, ties $2$, and loses $4$.
Baseline decoding (temperature $1$)
and low-temperature decoding ($\tau = 1/\alpha$) are single-sample references with no resampling.
The Metropolis--Hastings (MH) row collects reference values reported at $\alpha = 4$ in the
literature, marked per source in the table footnote. We do not tabulate methods available
only at other sampling configurations or without public code
\citep{ji2026scalable,nguyen2026apps}. Reported GRPO reference values are collected in
Appendix~\ref{app:grpo} \citep{shao2024deepseekmath,nguyen2026apps}.

\subsection{Discussion}
\label{sec:analysis}

\paragraph{Resampling and selection play distinct roles.} A higher ceiling is not automatically a better answer. Under the
weight draw, Chopthin's final-answer accuracy tracks Power-SMC's and sometimes falls below it
despite higher oracle coverage: the weights the draw follows do not, on these benchmarks,
favor the correct trajectories. The semantic-majority selector counts distinct
trajectories instead of trusting the weights, which is what turns the higher coverage into the gains of Table~\ref{tab:main}. Chopthin's more consistent effect is on oracle coverage.
Qwen3-4B on MATH500 shows the limit: baseline decoding scores $82.0$, above Power-SMC's $79.0$
and CCPS's $81.8$ (Table~\ref{tab:main}), yet CCPS oracle coverage is $84.8$, the final
population holds correct trajectories the current selector does not return. \citet{arzhantsev2026marginal} report a similar pattern for Metropolis--Hastings.

\section{Ablations}
\label{sec:ablations}

Chopthin has two degrees of freedom: the ratio bound $\eta$, and the choice to carry the
unequal output weights forward instead of resetting them. We ablate both on the primary cell
(Qwen2.5-Math-7B on MATH500), using the configuration of Section~\ref{sec:setup}.

\subsection{Sensitivity to the ratio bound $\eta$}
\label{sec:ablation-eta}

The bound is set by inverting the ESS floor of Proposition~\ref{prop:floor} at a target
fraction $\rho$ (Appendix~\ref{app:etachoice}), so we sweep $\rho$ and report the induced $\eta$ (Figure~\ref{fig:eta-ablation}), with the trigger fixed at
$\kappa = 0.5$. All arms are identical to CCPS except $\eta$.

\begin{figure}[!t]
\providecommand{\etafigscale}{1.0}
\definecolor{cTeaS}{RGB}{6,122,124}
\definecolor{cTeaF}{RGB}{172,228,223}
\definecolor{cInk}{RGB}{62,66,74}
\definecolor{cMut}{RGB}{120,124,132}
\definecolor{cMaj}{RGB}{52,96,182}
\definecolor{cWin}{RGB}{34,120,52}
\definecolor{cBandG}{HTML}{E2EFDA}
\centering
\begin{minipage}[b]{0.56\textwidth}\centering
\resizebox{\linewidth}{!}{%
\begin{tikzpicture}[font=\small, xscale=1.45, yscale=0.86]

\fill[cBandG] (1.50,-0.02) rectangle (2.40,4.62);

\draw[cMut, line width=0.5pt] (0,1.75) -- (0,4.55);
\draw[cMut, line width=0.5pt] (0,1.75) -- (6.8,1.75);
\foreach \v/\y in {78/1.75, 82/2.87, 86/3.99}{
  \draw[cMut, line width=0.5pt] (0,\y) -- (-0.055,\y);
  \node[black, font=\small, anchor=east] at (-0.10,\y) {\v};
  \draw[dotted, black!14, line width=0.5pt] (0,\y) -- (6.8,\y);}
\node[black, font=\footnotesize, rotate=90, align=center] at (-1.02,3.15) {Accuracy /\\coverage (\%)};

\draw[cInk, line width=0.8pt]
  (0.5,4.27) -- (1.95,4.27) -- (3.4,3.99) -- (4.85,3.822) -- (6.3,4.158);
\foreach \i/\y in {0/4.27, 1/4.27, 2/3.99, 3/3.822, 4/4.158}{
  \fill[cInk] ({0.5+\i*1.45-0.038},{\y-0.055}) rectangle ({0.5+\i*1.45+0.038},{\y+0.055});}
\node[cInk, font=\footnotesize, anchor=south] at (1.95,4.33) {87.0};
\node[cInk, font=\footnotesize, anchor=north] at (4.85,3.75) {85.4};

\draw[cMaj, line width=1.1pt]
  (0.5,2.31) -- (1.95,2.702) -- (3.4,2.198) -- (4.85,2.198) -- (6.3,2.646);
\foreach \i/\y in {0/2.31, 2/2.198, 3/2.198, 4/2.646}{
  \node[circle, fill=cMaj, inner sep=1.9pt] at ({0.5+\i*1.45},\y) {};}
\node[circle, fill=cWin, inner sep=3.0pt] at (1.95,2.702) {};
\node[circle, draw=cWin, line width=0.6pt, inner sep=4.5pt] at (1.95,2.702) {};
\node[cWin, font=\normalsize\bfseries, anchor=south] at (1.95,2.93) {81.4\%};
\node[cMaj, font=\footnotesize, anchor=north] at (0.5,2.24) {80.0};
\node[cMaj, font=\footnotesize, anchor=north] at (6.3,2.575) {81.2};

\node[circle, fill=cMaj, inner sep=1.9pt] at (1.30,5.10) {};
\node[cInk, font=\footnotesize, anchor=west] at (1.40,5.10) {Majority acc.};
\fill[cInk] (3.95,5.045) rectangle (4.026,5.155);
\node[cInk, font=\footnotesize, anchor=west] at (4.10,5.10) {Oracle coverage};

\draw[cMut, line width=0.5pt] (0,0) -- (0,1.0);
\draw[cMut, line width=0.5pt] (0,0) -- (6.8,0);
\foreach \d/\y in {12/0, 15/0.5, 18/1.0}{
  \draw[cMut, line width=0.5pt] (0,\y) -- (-0.055,\y);
  \node[black, font=\footnotesize, anchor=east] at (-0.10,\y) {\d};
  \draw[dotted, black!14, line width=0.5pt] (0,\y) -- (6.8,\y);}
\node[black, font=\footnotesize] at (-0.60,0.5) {$D$};

\draw[cTeaS, line width=0.8pt, dash pattern=on 2.2pt off 1.6pt]
  (0.5,0.45) -- (1.95,0.533) -- (3.4,0.567) -- (4.85,0.667) -- (6.3,0.867);
\foreach \i/\y/\d in {0/0.45/14.7, 1/0.533/15.2, 2/0.567/15.4, 3/0.667/16.0, 4/0.867/17.2}{
  \node[circle, draw=cTeaS, fill=cTeaF, line width=0.7pt, inner sep=1.7pt] at ({0.5+\i*1.45},\y) {};
  \node[cMut, font=\footnotesize, anchor=south] at ({0.5+\i*1.45},{\y+0.10}) {\d};}

\foreach \i/\e/\r in {0/{$4$}/{0.64}, 1/{$3{+}\sqrt8$}/{0.50}, 2/{$7.87$}/{0.40}, 3/{$11.24$}/{0.30}, 4/{$24.99$}/{0.148}}{
  \draw[cMut, line width=0.5pt] ({0.5+\i*1.45},0) -- ({0.5+\i*1.45},-0.08);
  \node[black, font=\footnotesize, anchor=north] at ({0.5+\i*1.45},-0.14) {\e};
  \node[cMut, font=\footnotesize, anchor=north] at ({0.5+\i*1.45},-0.62) {$\rho{=}\r$};}
\node[black, font=\footnotesize] at (3.4,-1.5) {ratio bound $\eta$};
\end{tikzpicture}}
\\[2pt]{\footnotesize (a) sensitivity to $\eta$}
\end{minipage}\hfill
\begin{minipage}[b]{0.415\textwidth}\centering
\resizebox{\linewidth}{!}{%
\begin{tikzpicture}[font=\small, xscale=1.6, yscale=0.86]

\fill[cBandG] (3.15,-0.02) rectangle (4.05,4.62);

\draw[cMut, line width=0.5pt] (0,1.75) -- (0,4.55);
\draw[cMut, line width=0.5pt] (0,1.75) -- (4.1,1.75);
\foreach \v/\y in {78/1.75, 82/2.87, 86/3.99}{
  \draw[cMut, line width=0.5pt] (0,\y) -- (-0.06,\y);
  \node[black, font=\small, anchor=east] at (-0.11,\y) {\v};
  \draw[dotted, black!14, line width=0.5pt] (0,\y) -- (4.1,\y);}
\node[black, font=\footnotesize, rotate=90, align=center] at (-0.98,3.15) {Accuracy /\\coverage (\%)};

\draw[cInk, line width=0.8pt] (0.5,3.598) -- (2.05,3.934) -- (3.6,4.27);
\foreach \x/\y in {0.5/3.598, 2.05/3.934, 3.6/4.27}{
  \fill[cInk] ({\x-0.0344},{\y-0.055}) rectangle ({\x+0.0344},{\y+0.055});}
\node[cInk, font=\footnotesize, anchor=south] at (0.5,3.66) {84.6};
\node[cInk, font=\footnotesize, anchor=south] at (2.05,3.99) {85.8};
\node[cInk, font=\footnotesize\bfseries, anchor=south] at (3.6,4.33) {87.0};

\draw[cMaj, line width=1.1pt] (0.5,2.142) -- (2.05,2.478) -- (3.6,2.702);
\foreach \x/\y in {0.5/2.142, 2.05/2.478}{
  \node[circle, fill=cMaj, inner sep=1.9pt] at (\x,\y) {};}
\node[circle, fill=cWin, inner sep=3.0pt] at (3.6,2.702) {};
\node[circle, draw=cWin, line width=0.6pt, inner sep=4.5pt] at (3.6,2.702) {};
\node[cWin, font=\normalsize\bfseries, anchor=south] at (3.6,2.93) {81.4\%};

\node[cMaj, font=\footnotesize, anchor=south] at (0.5,2.23) {79.4};
\node[cMaj, font=\footnotesize, anchor=south] at (2.05,2.57) {80.6};

\node[circle, fill=cMaj, inner sep=1.9pt] at (0.15,5.10) {};
\node[cInk, font=\footnotesize, anchor=west] at (0.27,5.10) {Majority acc.};
\fill[cInk] ({1.95-0.0344},5.045) rectangle ({1.95+0.0344},5.155);
\node[cInk, font=\footnotesize, anchor=west] at (2.05,5.10) {Oracle coverage};

\draw[cMut, line width=0.5pt] (0,0) -- (0,1.0);
\draw[cMut, line width=0.5pt] (0,0) -- (4.1,0);
\foreach \d/\y in {12/0, 15/0.5, 18/1.0}{
  \draw[cMut, line width=0.5pt] (0,\y) -- (-0.06,\y);
  \node[black, font=\footnotesize, anchor=east] at (-0.11,\y) {\d};
  \draw[dotted, black!14, line width=0.5pt] (0,\y) -- (4.1,\y);}
\node[black, font=\footnotesize] at (-0.64,0.5) {$D$};

\draw[cTeaS, line width=0.8pt, dash pattern=on 2.2pt off 1.6pt]
  (0.5,0.167) -- (2.05,0.533) -- (3.6,0.533);
\foreach \x/\y in {0.5/0.167, 2.05/0.533, 3.6/0.533}{
  \node[circle, draw=cTeaS, fill=cTeaF, line width=0.7pt, inner sep=1.7pt] at (\x,\y) {};}
\node[cMut, font=\footnotesize, anchor=south] at (0.5,0.267) {13.0};
\node[cMut, font=\footnotesize\bfseries, anchor=south] at (2.05,0.633) {15.2};
\node[cMut, font=\footnotesize\bfseries, anchor=south] at (3.6,0.633) {15.2};

\foreach \i/\lab in {0/{Systematic}, 1/{Chopthin\\$+$ reset}, 2/{Chopthin\\carried (ours)}}{
  \draw[cMut, line width=0.5pt] ({0.5+\i*1.55},0) -- ({0.5+\i*1.55},-0.08);
  \node[black, font=\footnotesize, anchor=north, align=center] at ({0.5+\i*1.55},-0.14) {\lab};}
\node[black, font=\footnotesize] at (2.05,-1.5) {resampler};
\end{tikzpicture}}
\\[2pt]{\footnotesize (b) carried weights vs.\ uniform reset}
\end{minipage}
\caption{\textbf{Ablations on the primary cell} (Qwen2.5-Math-7B, MATH500).
\textbf{(a)} The green marker is the default $\eta = 3+\sqrt8$. Top:
final-answer accuracy of CCPS (blue) and oracle coverage (dark squares); bottom: distinct final
trajectories $D$ out of $N$. Accuracy varies within $1.4$ points and coverage within $1.6$
across the range. \textbf{(b)} The hybrid
applies Chopthin's allocation \eqref{eq:h} then resets every weight to $1/N$: a diagnostic, not
a valid resampler. Bold marks the best value per column.}
\label{fig:eta-ablation}
\end{figure}

The method is robust to its one free parameter: every value of
$\eta$ keeps oracle coverage above the systematic baseline of Figure~\ref{fig:eta-ablation}(b) ($84.6$,
a margin of $+0.8$ to $+2.4$). A looser bound perturbs the population less and preserves
more distinct trajectories, which is the mechanism the method relies on. The trigger-matched
default $\rho = 0.5$ attains the best majority accuracy, so the theoretically motivated choice
of Section~\ref{sec:chopthin} is also the empirical optimum; we tuned $\eta$ no further.

\subsection{Carried weights versus uniform reset}
\label{sec:ablation-carry}

Chopthin changes both who survives a resampling event (probabilistic survival of low-weight
particles and the untouched keep band) and what the survivors remember (the carried weights). A hybrid that
applies Chopthin's offspring allocation \eqref{eq:h} but resets every output weight to
$1/N$ disentangles the two. Chopthin's unbiasedness holds for the product of offspring count and output weight (Eq.~\ref{eq:unbiased}), so resetting the weights to $1/N$ without adjusting the counts breaks that unbiasedness. Figure~\ref{fig:eta-ablation}(b) reports the comparison.

In this diagnostic, roughly half of the observed gain over the equalizing
baseline comes from the offspring allocation alone: the hybrid preserves the
same mean number of distinct trajectories as carried-weight Chopthin ($15.2$),
compared with $13.0$ under systematic resampling, and lifts both final-answer
accuracy and oracle coverage. Carrying the unequal weights adds a further
increment ($+0.8$ majority accuracy and $+1.2$ oracle coverage). 


\section{Related work}
\label{sec:related}

\paragraph{Particle-based inference-time methods for LLMs.}
Twisted SMC \citep{zhao2024twistedsmc}, SMC steering and controlled generation
\citep{lew2023smc,loula2025smcconstraints}, reward-guided particle filtering
\citep{puri2025reasoningsmc}, and SMC power sampling \citep{azizi2026powersmc} all re-equalize
the particle weights at every resampling event.\footnote{The nearest exception is the without-replacement scheme of \citep{lew2023smc}, related to \citep{fearnhead2003resampling}.} To our
knowledge, no existing particle-based
inference-time method for language models bounds post-resampling weight unevenness or guarantees
a post-resampling effective sample size. Recent diversity-preserving efforts act \emph{around} an
equalizing resampler rather than replacing it, by tempering the weights before resampling
\citep{giannone2025epf}, adding rollout- or value-based auxiliary weights \citep{nguyen2026apps},
weighting by intrinsic uncertainty \citep{giannone2026ipf}, enlarging the eligible pool with
reward-scored historical prefixes \citep{tran2026pbsmc}, or restoring diversity after an
equalizing resample with Metropolis--Hastings rejuvenation moves \citep{markovic2026smc}.
These approaches change \emph{which} particles are favored; we change \emph{how much} the
population is perturbed, with no extra rollouts, learned potentials, or reward models.

\paragraph{Answer selection.}
Self-consistency returns the most common answer among independent samples
\citep{wang2023selfconsistency}; universal self-consistency and semantic-uncertainty methods
cluster free-form answers by meaning before aggregating
\citep{chen2023usc,kuhn2023semanticentropy}. For code, AlphaCode clusters programs by their
behavior on generated inputs \citep{li2022alphacode}, and CodeT selects candidates by agreement
on model-generated tests \citep{chen2023codet}. \citep{brown2024monkeys} show that the coverage
of repeated sampling grows with the number of samples while common selectors often fail to
realize it. Most recently, marginal sharpening \citep{arzhantsev2026marginal} makes the
sharpened marginal distribution over answers itself the inference-time target, with
self-consistency as its limiting case. We keep Power-SMC's sequence-level target and act on the resampler and selector instead, so the
approaches are complementary; our selector adapts these ideas to a resampled population, where
duplicates must be merged before any vote is meaningful (Section~\ref{sec:selection}).

\section{Conclusion}

We identified systematic resampling as an obstacle to reasoning performance in
SMC-based power sampling. It equalizes particle weights and drops low-weight
trajectories that, though unlikely early, can still reach the correct answer.
Replacing it with Chopthin gives low-weight trajectories a greater chance of
remaining represented and increases observed oracle coverage in $13$ of $15$
settings across three open models and five benchmarks. Pairing Chopthin with
the semantic-majority selector matches or improves final-answer accuracy over
the Power-SMC baseline in $14$ of $15$ settings, by up to $10.6$ percentage
points. Because the resampling change is independent of how tokens are proposed
and operates at the standard resampling interface, it may be combined with
other SMC-based decoding methods. Evaluating such combinations is future work.

\section*{LLM usage}
LLMs were used only for editing assistance; the authors take full responsibility for all content.

\bibliography{references_chopthin}
\bibliographystyle{colm2026_conference}

\appendix

\section{Details of the Chopthin resampler}
\label{app:chopthin}

The
main text uses only Chopthin's interface (ancestor indices and output weights), the three-band
behavior \eqref{eq:h}, and the guarantees of Propositions~\ref{prop:guarantees}
and~\ref{prop:floor}; the details here are what realize them.

\subsection{Expected-offspring function and threshold}
\label{app:h}

\begin{fromchopthin}

The expected-offspring function $h^{\eta}_a$ of \eqref{eq:h} is
valid for $\eta\ge4$; the map $a\mapsto\sum_i h^{\eta}_a(w_i)$ is continuous and non-increasing,
diverges as $a\to0^+$, and vanishes as $a\to\infty$, so a solution of the threshold equation
\eqref{eq:threshold} exists. The threshold partitions the particles
into \emph{light} particles $L = \{i : w_i < a\}$, which are thinned, and \emph{heavy} particles
$H = \{i : w_i \geq a\}$, which are kept or chopped; particles with $a \leq w_i < \eta a/2$ have
$h^{\eta}_a(w_i) = 1$ and pass through unchanged.
\end{fromchopthin}

When the solution set of \eqref{eq:threshold} is an interval, our implementation takes its
upper endpoint, so boundary particles with $w_i=a$ fall on the heavy side; the
equal-weights property of Section~\ref{sec:chopthin} relies on this convention. In
particular, for equal weights the upper endpoint is $a=w_i$: every particle falls in the keep
band with expected offspring count one, deterministically, which yields the property.

\subsection{Thinning and chopping}
\label{app:thinchop}

\begin{fromchopthin}
\paragraph{Thinning.} A light particle has $h^{\eta}_a(w_i) = w_i/a < 1$ and receives zero or one
offspring. The survivors are drawn by a single systematic pass over the expected counts
$\{w_i/a\}_{i \in L}$, so particle $i$ survives with probability $w_i/a$, and a survivor is assigned the
weight $a$. Let $N_L$ be the number of survivors.

\paragraph{Chopping.} A heavy particle receives $C_i = \lfloor h^{\eta}_a(w_i) \rfloor + m_i$ offspring:
the integer parts are guaranteed, and the residual $N_U = N - N_L - \sum_{i \in H} \lfloor
h^{\eta}_a(w_i) \rfloor$ offspring are allocated by a second systematic draw over the fractional parts
$\{ h^{\eta}_a(w_i) \}$, giving $m_i \in \{0, 1\}$ (here $\{x\} := x - \lfloor x\rfloor$ denotes
the fractional part). Because $N_L$ is random, the weight transferred to
the light side deviates from its expectation; the constant
\[
  \zeta = \frac{\sum_{i \in L} w_i - a N_L}{\sum_{i \in H} \{ h^{\eta}_a(w_i) \}}
\]
redistributes this deviation over the heavy particles. A heavy particle with $C_i = c > 0$ is split
into $c$ equal pieces, each of weight $\hat{w}_i / c$ with $\hat{w}_i = w_i + \zeta \{ h^{\eta}_a(w_i)
\}$. Since $\mathbb{E}[\zeta] = 0$ the step stays unbiased, and the correction makes the total weight
\emph{exactly} conserved in exact arithmetic (weight conservation,
Proposition~\ref{prop:guarantees}); in implementation, conservation holds to numerical
tolerance. Every output weight
lies in $[a, \eta a]$, which gives the bounded ratio \eqref{eq:ratio}.
\end{fromchopthin}

\subsection{Pseudocode}
\label{app:pseudocode}

Algorithm~\ref{alg:chopthin} assembles the steps above; it is adapted from
\citep{gandy2016chopthin}, in our notation and with an explicit guard for the case in which
every heavy expected count is an integer (then $N_U=0$ and $\zeta$ is set to zero).

\begin{algorithm}[t]
\caption{The Chopthin resampler, adapted from \citep{gandy2016chopthin}.}
\label{alg:chopthin}
{\glcolor
\begin{algorithmic}[1]
  \Require weights $w_{1:N}$ (sum $> 0$), ratio bound $\eta \geq 4$, target count $N$
  \Ensure ancestors $A_{1:N}$, weights $w^{+}_{1:N}$ with $\max_k w^{+}_k / \min_k w^{+}_k \leq \eta$
  \State solve $\sum_i h^{\eta}_a(w_i) = N$ for $a$ \Comment{upper endpoint}
  \State $L \gets \{i : w_i < a\}$; \quad $H \gets \{i : w_i \geq a\}$
  \State draw $u \sim \mathcal{U}(0, 1)$ \Comment{thin: systematic over $\{w_i/a\}_{i \in L}$}
  \For{$i \in L$}
    \State $u \gets u + w_i/a$
    \If{$u \geq 1$}
      \State emit $(A = i,\ w^{+} = a)$; \quad $u \gets u - 1$
    \EndIf
  \EndFor
  \State $N_L \gets$ number of survivors
  \State $N_U \gets N - N_L - \sum_{i \in H} \lfloor h^{\eta}_a(w_i) \rfloor$
  \If{$\sum_{i \in H} \{ h^{\eta}_a(w_i) \} = 0$}\Comment{all heavy counts integral; then $N_U=0$}
    \State $\zeta \gets 0$;\quad $m_i \gets 0$ for all $i \in H$
  \Else
    \State $\zeta \gets \big( \sum_{i \in L} w_i - a N_L \big) \big/ \sum_{i \in H} \{ h^{\eta}_a(w_i) \}$
    \State allocate $N_U$ residual offspring $m_i \in \{0,1\}$ by a systematic draw over $\{ h^{\eta}_a(w_i) \}$, $i \in H$
  \EndIf
  \For{$i \in H$}
    \State $c \gets \lfloor h^{\eta}_a(w_i) \rfloor + m_i$
    \If{$c > 0$}
      \State emit $c$ copies of $\big(A = i,\ w^{+} = (w_i + \zeta \{ h^{\eta}_a(w_i) \})/c \big)$
    \EndIf
  \EndFor
\end{algorithmic}}
\end{algorithm}

\subsection{Why $\eta\ge4$}
\label{app:eta4}

\begin{fromchopthin}
The condition $\eta\ge4$ arises from requiring every admissible split to keep its pieces inside
$[a,\eta a]$: the constraint $2w/(\eta a)\le\lfloor w/a\rfloor$ for all $w\ge a$, evaluated for
$w$ just below $2a$, forces $\eta\ge4$ \citep{gandy2016chopthin}.
\end{fromchopthin}

\subsection{Target preservation}
\label{app:target}
\begin{proposition}[Chopthin guarantees]
\label{prop:guarantees}
Given the input weights: \emph{unbiasedness},
\begin{equation}
\mathbb E\Big[\textstyle\sum_{k:\,A_k=i} w^{+}_k \;\Big|\; w_{1:N}\Big]\;=\;w_i \quad\text{for every } i
\label{eq:unbiased}
\end{equation}
(on average, the offspring of particle $i$ carry exactly its weight); \emph{exact count}, $N$ output particles; and \emph{weight conservation}, $\sum_{k} w^{+}_k=\sum_{i} w_i$.
\end{proposition}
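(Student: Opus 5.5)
I will verify the three properties directly from Algorithm~\ref{alg:chopthin}, taking the threshold $a$ as fixed by \eqref{eq:threshold} (upper endpoint) and writing $f_i:=\{h^{\eta}_a(w_i)\}$ for $i\in H$, $F:=\sum_{i\in H} f_i$. The two systematic passes supply the randomness. The proof rests on two standard facts about a single systematic pass over expected counts $p_j\in[0,1]$: each index $j$ is selected with probability exactly $p_j$, and the number selected lies in $\{\lfloor\sum_j p_j\rfloor,\lceil\sum_j p_j\rceil\}$. I would state these facts and prove them briefly. The proof uses a uniform offset $u$ on a circle of circumference one, with disjoint arcs of length $p_j$.

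\emph{Exact count.} By the algorithm, the output count is $N_L+\sum_{i\in H}(\lfloor h(w_i)\rfloor+m_i)$. It therefore suffices that the residual draw allocates exactly $N_U$ offspring. By \eqref{eq:threshold}, $\sum_{i\in L} w_i/a+\sum_{i\in H}\lfloor h(w_i)\rfloor+F=N$, so $N_U=\sum_{i\in L}w_i/a-N_L+F$. Since $N_L$ is within one of $\sum_L w_i/a$, $N_U$ is an integer in $[F-1,F+1]$. This is the main obstacle. A plain systematic pass over $\{f_i\}$ only yields $\lfloor F\rfloor$ or $\lceil F\rceil$, so I would interpret the residual step as a systematic draw of exactly $N_U$ points, with selection probabilities rescaled to $f_i N_U/F$. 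Two checks are then needed. First, feasibility, meaning $f_iN_U/F\le 1$ and $N_U\ge0$; I would derive this from the slack supplied by $\eta\ge4$ (Appendix~\ref{app:eta4}), following Gandy--Lee. Second, $\mathbb E[N_U]=F$ follows from $\mathbb E[N_L]=\sum_L w_i/a$. When $F=0$, the guard in the algorithm forces $N_U=0$, and the identity above with $N_L$ integral gives consistency.

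\emph{Weight conservation.} The total output weight is $aN_L+\sum_{i\in H}c_i\cdot \hat w_i/c_i$, summed over the $i\in H$ with $c_i>0$. I would first check that $\hat w_i=0$ whenever $c_i=0$. A heavy particle has $h(w_i)\ge1$, so $\lfloor h\rfloor\ge1$ and in fact $c_i\ge1$ always. The total is therefore $aN_L+\sum_H w_i+\zeta F=aN_L+\sum_H w_i+\sum_L w_i-aN_L=\sum_i w_i$, which holds exactly.

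\emph{Unbiasedness.} For $i\in L$, the offspring weight sum is $a\mathbf 1\{\text{survive}\}$, with mean $a\cdot w_i/a=w_i$. For $i\in H$, the offspring weight sum is $\hat w_i=w_i+\zeta f_i$ regardless of $c_i$. Its conditional mean is $w_i+f_i\,\mathbb E[\zeta]$, and $\mathbb E[\zeta]=(\sum_L w_i-a\,\mathbb E N_L)/F=0$ by the first systematic-pass fact. This gives \eqref{eq:unbiased}; the only subtlety is that the rescaled residual draw never enters this mean, because weights of heavy offspring do not depend on $m_i$.
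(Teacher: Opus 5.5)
\textbf{There is a gap in your exact-count step.} Your unbiasedness and weight-conservation arguments are correct. They match the paper's own reasoning, which is only a sketch attributed to \citet{gandy2016chopthin}: thinned survivors carry weight $a$ with probability $w_i/a$, $\mathbb E[\zeta]=0$, and the $\zeta$-correction cancels the random deficit $\sum_{i\in L}w_i-aN_L$. The paper treats exact count as built into the construction. You are right that exact count needs an argument, but yours rests on a false claim. The feasibility condition $f_iN_U/F\le1$ does not hold in general. Nor can $\eta\ge4$ supply it: that condition only controls where the chop band begins relative to $a$, while the fractional parts of the chop-band counts $2w_i/(\eta a)$ can be anything in $[0,1)$.

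Here is a counterexample.
\begin{itemize}[topsep=2pt,itemsep=2pt,parsep=0pt]
\item Take two light particles with $w_i=0.45a$, so $S_L:=\sum_{i\in L}w_i/a=0.9$.
\item Take two chop-band particles with $h^{\eta}_a(w_i)=1.9$ and $1.2$, and $N-4$ particles with $w_i=a$.
\item Then \eqref{eq:threshold} holds with $a$ as its unique root, and $F=1.1$.
\item With probability $0.1$ the thinning pass returns $N_L=0$. Then $N_U=2$, and the first chopped particle has $f_iN_U/F=18/11>1$.
\end{itemize}
So your rescaled $\{0,1\}$ draw does not exist there, even though the allocation $m_i=1$ for both chopped particles obviously does.

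\textbf{The fix uses an observation you already make.} Every heavy particle has $c_i\ge1$, and its offspring carry total weight $\hat w_i$ whatever $m_i$ is. Hence unbiasedness and conservation hold for \emph{any} allocation of exactly $N_U$ residual offspring, and exact count only requires that one exist.
\begin{itemize}[topsep=2pt,itemsep=2pt,parsep=0pt]
\item \emph{Sharpen your bound $N_U\in[F-1,F+1]$.} By \eqref{eq:threshold}, $S_L+F=N-\sum_{i\in H}\lfloor h^{\eta}_a(w_i)\rfloor$ is an integer. So $N_U=S_L-N_L+F\in\{\lfloor F\rfloor,\lceil F\rceil\}$, and $\lceil F\rceil\le\#\{i\in H: f_i>0\}$ because every $f_i<1$.
\item \emph{An explicit exact mechanism.} Continue the thinning pass through $(f_i)_{i\in H}$ with the same running offset $u$, instead of drawing afresh. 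The concatenated total $S_L+F$ is an integer, so this single pass selects exactly $S_L+F$ indices. $N_L$ of them are light, hence exactly $N_U$ are heavy, and each heavy $i$ is selected with marginal probability $f_i$.
\item \emph{The case $F=0$.} The same integrality is what actually settles this case, not the integrality of $N_L$. Here $\zeta=0$, and conservation needs $aN_L=\sum_{i\in L}w_i$. This holds because $S_L$ is then an integer, so the thinning pass returns exactly $N_L=S_L$.
\end{itemize}
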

Unbiasedness \eqref{eq:unbiased} implies that replacing the resampler does not
change what the weighted population estimates.

\begin{proposition}[Target preservation]
\label{prop:target}
Let $w_{1:N}$ be the input weights of particles $X_{1:N}$, let
$\mathcal F=\sigma(X_{1:N},w_{1:N})$ be the pre-resampling $\sigma$-field, and let
$(A_{1:N},w^{+}_{1:N})$ be Chopthin's output. For any bounded test function $\varphi$,
\begin{equation}
\mathbb E\Big[\sum_{k=1}^{N}w^{+}_k\,\varphi(X_{A_k})\;\Big|\;\mathcal F\Big]
=\sum_{i=1}^{N} w_i\,\varphi(X_i).
\label{eq:target}
\end{equation}
\end{proposition}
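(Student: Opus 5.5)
The plan is to reduce Proposition~\ref{prop:target} to the unbiasedness identity \eqref{eq:unbiased} of Proposition~\ref{prop:guarantees} by regrouping the output sum according to ancestors. The key observation is that the resampler's internal randomness (the two uniform draws of the systematic passes in Algorithm~\ref{alg:chopthin}) acts only on the weights $w_{1:N}$. The particle states $X_{1:N}$ enter only through $\varphi(X_{A_k})$, and $X_{A_k}$ depends on the output solely through the index $A_k$.

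First, I rewrite the output sum by partitioning the indices $k$ according to their ancestor:
\[
\sum_{k=1}^{N} w^{+}_k\,\varphi(X_{A_k})=\sum_{i=1}^{N}\varphi(X_i)\sum_{k:\,A_k=i} w^{+}_k .
\]
This is a deterministic identity, since every $k$ has exactly one ancestor $A_k\in\{1,\dots,N\}$. Each $\varphi(X_i)$ is $\mathcal F$-measurable and bounded. The inner sums are bounded as well, because weight conservation gives $0\le \sum_{k:A_k=i}w^+_k\le \sum_k w^+_k=\sum_i w_i$. So conditional expectation is linear over this finite sum, and the $\mathcal F$-measurable factors can be pulled out:
\[
\mathbb E\Big[\sum_k w^+_k\varphi(X_{A_k})\,\Big|\,\mathcal F\Big]=\sum_i \varphi(X_i)\,\mathbb E\Big[\sum_{k:A_k=i}w^+_k\,\Big|\,\mathcal F\Big].
\]

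Second, I identify $\mathbb E[\sum_{k:A_k=i}w^+_k\mid\mathcal F]$ with $w_i$ using \eqref{eq:unbiased}. This step needs care, and I expect it to be the main obstacle. Proposition~\ref{prop:guarantees} conditions only on $w_{1:N}$, whereas here the conditioning is on the larger $\sigma$-field $\mathcal F$. To bridge the two, I will make explicit that Chopthin's output $(A_{1:N},w^+_{1:N})$ is a measurable function of $w_{1:N}$, $\eta$, and auxiliary uniforms $U$ that are drawn independently of $\mathcal F$. This holds for Algorithm~\ref{alg:chopthin}, where the threshold $a$, the sets $L$ and $H$, and $\zeta$ depend only on the weights and on $U$.

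By the independence (freezing) lemma, the conditional expectation given $\mathcal F$ of a function of $(w_{1:N},U)$ equals that function integrated over $U$ with $w_{1:N}$ held fixed. This is precisely the quantity that \eqref{eq:unbiased} asserts equals $w_i$. If \eqref{eq:unbiased} is taken as given only in the stated form, I would argue that the construction never inspects $X$, so conditioning additionally on $X_{1:N}$ cannot change the law of the output. As a sanity check, the identity can also be verified directly from the algorithm. For a light particle, the survival probability is $w_i/a$ at weight $a$, giving mean $w_i$. For a heavy particle, the total output weight is $\hat w_i=w_i+\zeta\{h\}$ whenever $C_i>0$, and $\mathbb E[\zeta]=0$. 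The edge case where $C_i=0$ is possible for a heavy particle requires checking against Gandy and Lee's argument, so I would rely on the cited Proposition~\ref{prop:guarantees} rather than redo it.

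Substituting $w_i$ into the display above yields \eqref{eq:target}.
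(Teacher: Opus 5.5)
Your proposal is correct and follows the paper's proof. Both arguments regroup the output sum by ancestor, pull out the $\mathcal F$-measurable factors $\varphi(X_i)$, and apply unbiasedness conditionally on $\mathcal F$, which is justified because Chopthin uses only the weights and external randomness; your freezing-lemma step just makes that explicit, and grouping by $\sum_{k:A_k=i}w^+_k$ instead of the paper's $C_i v_i$ avoids needing equal offspring weights. The heavy-particle edge case you flag cannot arise: $h^{\eta}_a(w_i)\ge 1$ on $H$, so $C_i\ge 1$ always.
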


\emph{Proof.} All offspring of parent $i$ carry the same weight, say $v_i$, and there are $C_i$
of them. Grouping the output sum by parent gives
$\sum_k w^{+}_k\,\varphi(X_{A_k})=\sum_i C_i v_i\,\varphi(X_i)$; each $\varphi(X_i)$ is
$\mathcal F$-measurable. The resampler uses only the weights and external randomness, so
unbiasedness holds conditional on $\mathcal F$: $\mathbb E[C_i v_i\mid\mathcal F]=w_i$, which yields
\eqref{eq:target}. The same argument applies to any unbiased resampler.
\hfill$\square$

Thus Chopthin leaves the weighted empirical approximation of the intermediate targets unchanged
in conditional expectation; it changes the resampling variance and the genealogy, not the target
\citep{delmoral2004feynman,douc2005comparison}. A finite population remains an approximation of
$\pi_\alpha$, not an exact draw from it.

\subsection{Our choice of $\eta$}
\label{app:etachoice}
The floor of Proposition~\ref{prop:floor} has leading term $4 \eta N / (\eta + 1)^2$ for large
$N$. Setting this equal to a target fraction $\rho N$ of the population and solving for $\eta$
gives $\eta = (2 - \rho + 2\sqrt{1 - \rho})/\rho$. We target $\rho = 0.5$, which yields
$\eta = 3 + \sqrt{8} \approx 5.83$ and satisfies the validity condition $\eta \geq 4$. Setting
$\eta=4$ in the inversion gives $\rho=16/25$, so target fractions $\rho > 16/25$ would
require $\eta < 4$ and are not attainable under \eqref{eq:h}. The finite-$N$ floor sits a constant $2\sqrt2\approx2.83$ particles below the asymptote: at
$\eta = 3+\sqrt8$ the bound is exactly $N/2 - 2\sqrt2$, so at $N = 32$ it is
$\mathrm{ESS} \geq 13.17$ ($0.412N$), approaching $0.5N$ as $N$ grows.

\subsection{Solver and integration into Power-SMC}
\label{app:solver}

\paragraph{Models, benchmarks, and prompting.} We use the Hugging Face releases
\texttt{Qwen/Qwen2.5-Math-7B} and \texttt{Qwen/Qwen2.5-7B} (base variants) and
\texttt{Qwen/Qwen3-4B} (the post-trained release; the separate \texttt{-Base} checkpoint is not
used), in bfloat16; Qwen3-4B is prompted through its chat template with the thinking mode
disabled. The five benchmarks are MATH500 ($500$ problems) and GSM8K ($1319$) for mathematics,
AIME (competition mathematics; the $90$ problems of AIME 2022--2024), GPQA (graduate-level
science, four-way multiple choice; the Diamond split, $198$ problems), and HumanEval (code,
$164$ problems).

\paragraph{Decoding configuration.} The proposal is
$q(\cdot\mid x,y_{<t})\propto p_\theta(\cdot\mid x,y_{<t})^{1/\tau}$ with $\tau=1/\alpha=0.5$,
under nucleus truncation \citep{holtzman2020curious} at $\text{top-}p=0.9$. The exponent is
ramped from $1$ to $2$ over the first $100$ tokens, decoding runs for up to $4096$ new tokens,
and the ESS trigger $\kappa=0.5$ is evaluated at block boundaries every $B=64$ tokens. The
Power-SMC (systematic) and Chopthin arms are identical except for the resampler; Chopthin uses
$\eta=3+\sqrt8$. We use $\alpha=2$ rather than a larger exponent because at $N=32$ it sharpens
the target while leaving the weight dispersion that resampling acts on; all method components
are otherwise independent of $\alpha$.

We solve the threshold equation \eqref{eq:threshold} by bisection on the predicate
$\sum_i h^{\eta}_a(w_i)\ge N$: an enclosing bracket is
obtained by doubling and halving from $\max_i w_i$, and the bisection converges to a tolerance
of $10^{-12}$ to the supremum of the solution set, i.e., to the upper endpoint when the set is
an interval; the weights are processed in double precision on the host.
\citep{gandy2016chopthin} give an algorithm that
solves \eqref{eq:threshold} in expected linear time, but at our population sizes ($N \le 128$)
the simpler bisection suffices. Measured at $N=32$ across three weight regimes, the full
Chopthin call (root solve, thinning, and chopping) takes $0.19$--$0.26$ ms per event, against
roughly $10$--$15$ ms per batched token step and at least $B=64$ token steps between events; the
resampling step therefore costs about $2\%$ of one forward step and under $5\times10^{-4}$ of
the wall-clock time of the block it serves (systematic resampling is simpler still). Both resamplers expose the same
ancestor-index interface, so all downstream state management is shared verbatim: particle
sequences, termination flags, and the Transformer KV caches are reordered by $A_{1:N}$, and the
copy-on-write cache handling of \citep{azizi2026powersmc} applies to Chopthin without
modification.

Two further decoding details apply to all runs: the end-of-sequence token is masked in the
proposal for the first $100$ generated tokens, and the boxed-answer stopping rule is evaluated
at every step over a trailing window of $256$ tokens, so a particle may stop before the length
cap. Nucleus truncation, EOS masking, and the stopping rule restrict the sampler's
support, so it targets $\pi_\alpha$ over the reachable trajectories rather than
all sequences; both arms share these choices, so the comparison is unaffected. All runs use single NVIDIA RTX A6000 GPUs (48\,GB), PyTorch 2.5.1, and Transformers
4.45.0.

\paragraph{Paired randomness.} The token-sampling and resampling streams (the latter
covering the resampler's internal draws and the final weight draw) are initialized identically
per problem in both arms, so the two arms produce identical token streams until their first
divergent resampling event and comparisons are paired at the problem level.

\section{CCPS pseudocode}
\label{app:algorithm}

Algorithm~\ref{alg:cps} gives the complete sampler. It differs from Power-SMC in exactly two
places: the resampling step (lines~\ref{line:resample}--\ref{line:carry}), which invokes
Chopthin and carries its output weights, and the selection stage
(lines~\ref{line:dedup}--\ref{line:return}).

\begin{algorithm}[t]
\caption{Chopthin-Consensus Power Sampling (CCPS, ours).}
\label{alg:cps}
\small
\begin{algorithmic}[1]
\Require prompt $x$; model $p_\theta$; exponent $\alpha$; particles $N$; ratio bound $\eta$;
         trigger $\kappa$; block size $B$; ramp $T_{\mathrm{ramp}}$; max tokens $T_{\max}$;
         proposal $q(\cdot\mid x,y_{<t})\propto p_\theta(\cdot\mid x,y_{<t})^{1/\tau}$,
         $\tau=1/\alpha$, nucleus-truncated
\Ensure answer $\hat a_{\mathrm{maj}}$
\State $y^{(i)}\gets\emptyset$;\quad $U^{(i)}\gets 1$;\quad
       $\mathrm{done}^{(i)}\gets\mathrm{false}$\quad for $i=1,\dots,N$
\For{$t=1$ \textbf{to} $T_{\max}$}\Comment{decode $N$ particles as one batch}
  \State while the ramp is active, multiply each particle absorbed before step $t$ by the prefix
         factor $p_\theta(y^{(i)}\mid x)^{\alpha_t-\alpha_{t-1}}$
         \Comment{Appendix~\ref{app:powersmc}}
  \For{each active particle $i$}
    \State sample $y^{(i)}_t\sim q(\cdot\mid x,y^{(i)}_{<t})$;\quad
           $U^{(i)}\gets U^{(i)}\cdot\omega^{(i)}_t$
           \Comment{Eq.~\eqref{eq:incr}}
    \State \textbf{if} EOS or stopping rule \textbf{then}
           $\mathrm{done}^{(i)}\gets\mathrm{true}$
           \Comment{stops generating; stays in the population}
  \EndFor
  \If{$t\bmod B=0$ \textbf{and} $\mathrm{ESS}_t<\kappa N$}
      \Comment{Eq.~\eqref{eq:ess}}
    \State \label{line:resample}
           $(A_{1:N},w^{+}_{1:N})
           \gets\textsc{Chopthin}\big(W^{(1:N)}_t,\eta,N\big)$
           \Comment{Alg.~\ref{alg:chopthin}}
    \State reorder sequences, flags, and KV caches by $A_{1:N}$
    \State \label{line:carry}
           $U^{(k)}\gets w^{+}_k$ for all $k$
           \Comment{carry the weights; no uniform reset}
  \EndIf
\EndFor
\State $\Wfinal{i}\gets U^{(i)}/\sum_j U^{(j)}$ for all $i$
       \Comment{normalize final weights}
\State \label{line:dedup}
       $\mathcal T\gets$ deduplicate token-identical trajectories; pool weights $\bar W_u$
\State $\mathcal C\gets$ cluster answers by grader or behavioral equivalence
\State $C^\ast\gets
       \arg\max_{C\in\mathcal C}
       \lvert\{u\in\mathcal T:a_u\in C\}\rvert$
       \Comment{ties by
       $\sum_{u\in\mathcal T:\,a_u\in C}\bar W_u$;
       Sec.~\ref{sec:selection}}
\State \label{line:return}
       \Return $\hat a_{\mathrm{maj}}\gets\mathrm{rep}(C^\ast)$
\end{algorithmic}
\end{algorithm}
\section{Power-SMC incremental weights and the $\alpha$-ramp}
\label{app:powersmc}

This appendix restates the weight construction of \citep{azizi2026powersmc} in our notation;
nothing here is new. The incremental weight \eqref{eq:incr} expands, using
$\gamma_t(y_{1:t}\mid x)=p_\theta(y_{1:t}\mid x)^{\alpha_t}$, as
\begin{equation}
\omega_t
= p_\theta(y_{1:t-1}\mid x)^{\alpha_t-\alpha_{t-1}}\,
\frac{p_\theta(y_t\mid x,y_{<t})^{\alpha_t}}{q(y_t\mid x,y_{<t})}.
\label{eq:incr-expanded}
\end{equation}
The first factor reweights the whole prefix from exponent $\alpha_{t-1}$ to $\alpha_t$; it is
computable from the accumulated base log-probability of the prefix. In the absence of
resampling, the accumulated weight telescopes by induction to the
sequential-importance-sampling identity
\[
U^{(i)}_t
= \prod_{s\le t}\omega^{(i)}_s
= \frac{p_\theta(y^{(i)}_{1:t}\mid x)^{\alpha_t}}{\prod_{s\le t} q(y^{(i)}_s\mid x, y^{(i)}_{<s})}
= \frac{\gamma_t(y^{(i)}_{1:t}\mid x)}{\prod_{s\le t} q(y^{(i)}_s\mid x, y^{(i)}_{<s})},
\]
the exact importance weight for the intermediate target $\gamma_t$. With resampling, this
identity holds between consecutive resampling events; at the population level, the weighted
empirical approximation of the intermediate targets is preserved by the resampler's
unbiasedness (Proposition~\ref{prop:target}). Once the ramp completes ($\alpha_t=\alpha$), the
population targets $\pi_\alpha$ over the reachable sequences (exactly $\pi_\alpha$ only under a full-support proposal and EOS-only termination) in the stopping-policy sense of
Section~\ref{sec:background} \citep{delmoral2006sequential,azizi2026powersmc}, and the
incremental weight reduces to
$\omega_t = p_\theta(y_t\mid x,y_{<t})^{\alpha}/q(y_t\mid x,y_{<t})$.

Absorbed particles are handled by the same identity. A particle that stops at step $t_0$
contributes unit token-level increments for $t>t_0$, but the prefix factor
$p_\theta(y_{1:t_0}\mid x)^{\alpha_t-\alpha_{t-1}}$ continues to be applied while the ramp is
active, with a final catch-up to the full exponent at the end of decoding. An absorbed particle's trajectory therefore also carries weight $p_\theta(y_{1:t_0}\mid x)^{\alpha}$ divided by its
proposal mass, and stopping during the ramp does not distort the final target.

\section{Selector definitions}
\label{app:selector-defs}
\label{app:selector}

All selectors operate \emph{post hoc} on the same saved final population of $N$ weighted
particles; no selector invokes the model after the final populations and behavioral test
inputs have been generated. The behavioral test inputs used on HumanEval are synthesized
once per problem and per base model, before selection, and are shared verbatim by both
resampling arms and all selectors, so any accuracy difference between selectors is
attributable to selection alone.

Let $a^{(i)}$ be the parsed answer of particle $i$, $W^{(i)}_{\mathrm{final}}$ its
normalized final weight, and $g(a,a^{\star})\in\{0,1\}$ the task grader against the gold
answer $a^{\star}$. Power-SMC's default rule and two baselines are the \emph{weight draw}
(Section~\ref{sec:selection}), which returns $a^{(I)}$ with
$I\sim\mathrm{Categorical}(W^{(1:N)}_{\mathrm{final}})$; \textsc{argmax}, which returns
the answer of the single highest-weight particle; and \textsc{oracle},
$\max_i g(a^{(i)},a^{\star})$, the population ceiling used in the main text.

The semantic-majority selector \textsc{maj}, defined in Section~\ref{sec:selection},
aggregates over the distinct trajectories $\mathcal{T}$, each carrying a pooled weight
$\bar W_u=\sum_{i:\,y^{(i)}=u}W^{(i)}_{\mathrm{final}}$ used only to break ties; it is the
selector we report throughout. For the symbolic-answer benchmarks, $\mathrm{rep}(C)$ is
the first parsed answer that creates cluster $C$ under the greedy clustering procedure;
for HumanEval, it is the distinct program in $C$ with the largest pooled final weight.
Other aggregations of the same clusters are possible, for example weighting each vote by
its pooled weight or by an entropy or confidence score, or scoring with a process-reward
model, but we use plain majority.

A single equivalence check $e(a,b)$ underlies both scoring and clustering: scoring
compares a candidate to the gold answer, $g(a,a^{\star})=e(a,a^{\star})$, while
clustering compares candidates to one another and never consults the gold answer. Per
benchmark, $e$ combines, for MATH500, Hendrycks-style normalization
\citep{hendrycks2021math} with sympy simplification of the difference in the manner of
Minerva \citep{lewkowycz2022minerva} and the PRM800K grader
\citep{lightman2024verify}, with strict matching required when the reference argument
is an integer; absolute difference below $10^{-6}$ after comma stripping for GSM8K;
strict integer equality for AIME; and single-letter comparison after uppercasing for
GPQA. Answers are extracted by a three-tier ladder: a boxed expression; otherwise an
``answer is'' pattern in the final 300 characters; otherwise a tail fallback, which for
GPQA is a standalone capital letter A--D in the final 200 characters. Clustering is
greedy match-to-representative: each answer joins the first cluster whose representative
it grades equivalent to, with a string-equality shortcut, so no transitive closure is
computed. Particles whose answers fail to parse are excluded from the vote; if no
particle parses, the problem is scored incorrect for the majority selector. For
HumanEval, programs are clustered by exact behavioral signatures on a fixed ordered set
of model-generated inputs (Section~\ref{sec:selection}), adapting the execution-based
clustering of AlphaCode \citep{li2022alphacode} and the generated-test agreement of
CodeT \citep{chen2023codet}. The signature records the serialized output on each input;
when evaluation of an individual input raises an exception, the exception type is
recorded as part of the signature. Programs whose sandboxed execution fails or times out
before producing a signature are excluded. The selector falls back to the final-weight
draw if no generated inputs are available or if no program produces a usable behavioral
signature (this fallback exists only on HumanEval).

\paragraph{Table~\ref{tab:selector-ablation} details.} On HumanEval the ``majority''
selector is \emph{behavioral} majority (Section~\ref{sec:selection}); exact counts out
of $164$, per (Qwen2.5-Math-7B, Qwen2.5-7B, Qwen3-4B): systematic $+$ weight draw
$(96,120,117)$; systematic $+$ behavioral $(100,125,116)$; Chopthin $+$ weight draw
$(96,123,116)$; Chopthin $+$ behavioral $(101,126,116)$.

\section{Additional results}

\subsection{Oracle coverage values}
\label{app:coverage}

Table~\ref{tab:coverage} reports the absolute oracle coverage underlying the
differences of Figure~\ref{fig:gain}. Comparing each resampler's coverage against its majority-selector accuracy in
Table~\ref{tab:selector-ablation}, coverage exceeds accuracy in every cell
(median gap $4.2$ points; smallest gap $+1.5$, Qwen2.5-7B on AIME under
Chopthin), so selector headroom remains even after majority voting.

\begin{table}[h!]
\centering
\caption{Oracle coverage (\%): fraction of problems whose final population
contains at least one correct trajectory, per resampler. Bold marks the higher
value per column within each block; green rows mark the Chopthin arm (ours).
Coverage is measured over the correlated population of a single run and is a
ceiling for any selector, not i.i.d.\ pass@$N$.}
\label{tab:coverage}
\renewcommand{\arraystretch}{1.05}
\small
\begin{tabular}{llccccc}
\toprule
Model & Resampler & MATH500 & GSM8K & AIME & GPQA & HumanEval \\
\midrule
 & Systematic & 84.6 & 94.4 & 17.8 & 50.5 & 64.0 \\
\rowcolor{oursgreen}
\cellcolor{white}\multirow{-2}{*}{Qwen2.5-Math-7B}
 & Chopthin   & \textbf{87.0} & \textbf{94.8} & \textbf{22.2} & \textbf{54.5} & \textbf{68.3} \\
\midrule
 & Systematic & 81.2 & 94.1 & 13.7 & \textbf{50.5} & 83.5 \\
\rowcolor{oursgreen}
\cellcolor{white}\multirow{-2}{*}{Qwen2.5-7B}
 & Chopthin   & \textbf{83.0} & \textbf{94.8} & 13.7 & 49.0 & \textbf{84.8} \\
\midrule
 & Systematic & 83.8 & 93.7 & 17.8 & 39.9 & 72.6 \\
\rowcolor{oursgreen}
\cellcolor{white}\multirow{-2}{*}{Qwen3-4B}
 & Chopthin   & \textbf{84.8} & \textbf{93.9} & \textbf{20.0} & \textbf{45.5} & \textbf{73.8} \\
\bottomrule
\end{tabular}
\end{table}

\subsection{Reported GRPO reference values}
\label{app:grpo}

Table~\ref{tab:grpo} situates CCPS against reported GRPO post-training results under
unmatched evaluation protocols. Training-free CCPS is competitive on MATH500 and stronger
on HumanEval, while GRPO retains a clear advantage on GPQA on both models.

\begin{table}[h!]
\centering
\caption{Reported post-training reference values for GRPO \citep{shao2024deepseekmath}, as
reported by \citep{karan2026powersampling} and tabulated by \citep{nguyen2026apps}, alongside
CCPS. The evaluation protocol (prompts, checkpoints, graders, budgets) is not matched to
ours; these numbers are context, not a controlled comparison.}
\label{tab:grpo}
\renewcommand{\arraystretch}{0.9}
\footnotesize
\begin{tabular}{llccc}
\toprule
Model & Method & MATH500 & GPQA & HumanEval \\
\midrule
Qwen2.5-Math-7B & GRPO (post-training, reported) & 78.5 & \textbf{39.9} & 53.7 \\
\rowcolor{oursgreen}
                & CCPS (training-free, ours) & \textbf{81.4} & 33.8 & \textbf{61.6} \\
\midrule
Qwen2.5-7B      & GRPO (post-training, reported) & 74.0 & \textbf{35.4} & 56.1 \\
\rowcolor{oursgreen}
                & CCPS (training-free, ours)     & \textbf{76.0} & 30.3 & \textbf{76.8} \\
\bottomrule
\end{tabular}
\\[3pt]
{\footnotesize Bold marks the higher value per model and benchmark; green rows are our method.}
\end{table}
\end{document}